\def\BibTeX{{\rm B\kern-.05em{\sc i\kern-.025em b}\kern-.08em
    T\kern-.1667em\lower.7ex\hbox{   E}\kern-.125emX}}
\begin{document}

\title{Low-Regret and Low-Complexity Learning \\ for Hierarchical Inference
}

\author{  \IEEEauthorblockN{Sameep Chattopadhyay$^{1}$, Vinay Sutar$^{2}$, Jaya Prakash Champati$^{3}$, and Sharayu Moharir$^{2}$} \\
  \IEEEauthorblockA{$^{1}$Paul G. Allen School of Computer Science \& Engineering, University of Washington, $^{2}$Department of Electrical Engineering, IIT Bombay, $^{3}$Department of Computer Science, University of Victoria \\
                    Email:  sameepch@cs.washington.edu, vinay.sutar@iitb.ac.in, jpchampati@uvic.ca, sharayum@ee.iitb.ac.in}
}

\maketitle

\begin{abstract}

This work focuses on Hierarchical Inference (HI) in edge intelligence systems, where a compact Local-ML model on an end-device works in conjunction with a high-accuracy Remote-ML model on an edge-server. HI aims to reduce latency, improve accuracy, and lower bandwidth usage by first using the Local-ML model for inference and offloading to the Remote-ML only when the local inference is likely incorrect. A critical challenge in HI is estimating the likelihood of the local inference being incorrect, especially when data distributions and offloading costs change over time—a problem we term Hierarchical Inference Learning (HIL). We introduce a novel approach to HIL by modeling the probability of correct inference by the Local-ML as an increasing function of the model's confidence measure, a structure motivated by empirical observations but previously unexploited. We propose two policies, HI-LCB and HI-LCB-lite, based on the Upper Confidence Bound (UCB) framework. We demonstrate that both policies achieve order-optimal regret of $O(\log T)$, a significant improvement over existing HIL policies with $O(T^{2/3})$ regret guarantees. Notably, HI-LCB-lite has an $O(1)$ per-sample computational complexity, making it well-suited for deployment on devices with severe resource limitations. Simulations using real-world datasets confirm that our policies outperform existing state-of-the-art HIL methods.

\end{abstract}

\begin{IEEEkeywords}
Hierarchical Inference, Edge Computing, Online Learning, Adaptive Offloading
\end{IEEEkeywords}
\vspace*{-0.1 in}
\section{Introduction}
\vspace*{-0.1 in}


Edge-based inference offers fundamental advantages over a cloud-centric approach by enabling local data processing, which simultaneously reduces end-to-end latency and minimizes bandwidth consumption.
Significant advances in model compression and optimization have led to compact deep learning (DL) models like MobileNet \cite{sandler2018mobilenetv2}, EfficientNet \cite{tan2019efficientnet}, and Gemma-2B LLM \cite{gemmateam2024}, which run efficiently on smartphones, and tinyML models like ResNet-8 \cite{banbury2021mlperf} for micro-controllers, thus making edge-based inference possible. 
 However, due to their small size, these compact DL models have relatively
poor inference accuracy, which limits their generalization
capability and robustness to noise. 

We consider an edge intelligence system consisting of a compact DL model (Local-ML) hosted on the end-device and a state-of-the-art high accuracy DL model (Remote-ML) on an edge-server as illustrated in Figure~\ref{fig:model}.   Hierarchical Inference (HI) has emerged as an efficient distributed inference technique for such systems \cite{Nikoloska2021, alatat-hi, behera2023improved, hilf, Beytur2024,hedge-hi,behera2025exploring,datta2025online}. As shown in Figure~\ref{fig:model}, under HI, each data sample first undergoes inference on the Local-ML. Following this, a data sample is offloaded to a state-of-the-art DL model hosted on an edge/cloud server (Remote-ML model) only if the local inference is likely to be incorrect. In \cite{alatat-hi,behera2025exploring}, HI was shown to achieve a better trade-off between accuracy and offloading costs compared to the alternative distributed or collaborative inference approaches, including inference load balancing \cite{wang2018,ogden2020,Fresa2023} and DNN partitioning  \cite{kang2017,Li2020,hu2022}.

\begin{figure}[t]
    \centering
    \includegraphics[scale=0.25]{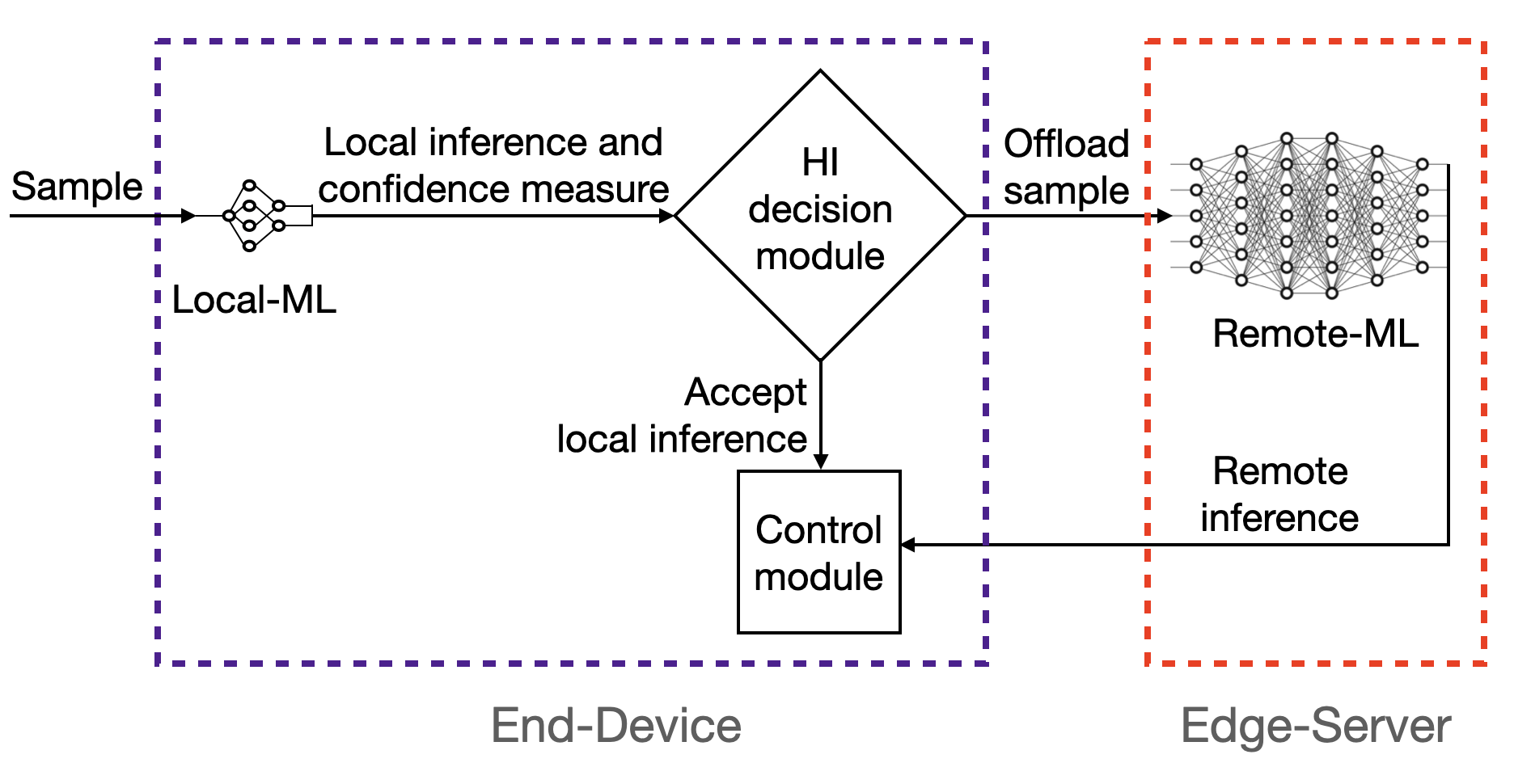}
    \vspace*{-0.1 in}
    \caption{Hierarchical Inference on an edge intelligence system comprising a compact on-device Local-ML model and an edge-server with a larger, more accurate Remote-ML model.}
    \label{fig:model}
    \vspace*{-0.3 in}
    \end{figure}

Implementing HI systems critically depends on assessing the likelihood of incorrect inference by the Local-ML model. Much of the prior research in HI has concentrated on multi-class classification \cite{hilf,hedge-hi,Beytur2024}. The Local-ML model trained for this task outputs a predicted class for each sample based on a confidence measure. 
When the Local-ML model is a neural network, the typical choice for the confidence measure is the largest output of the last softmax layer. However, our policies and analysis work for any confidence measure. A sample is offloaded if and only if the confidence measure of the Local-ML model for that sample is below a (time-varying) threshold. Initial approaches \cite{Nikoloska2021, alatat-hi, behera2023improved} computed an optimal, fixed threshold offline using labeled training data under a fixed offloading cost. However, real-world inference data often diverges from training data, and offloading costs can be time-varying. To address this, \cite{hilf} formulated the HI Learning (HIL) problem to dynamically learn the best offloading threshold. 

Existing HIL methods \cite{hilf,hedge-hi,Beytur2024} operate without assuming any underlying structure in the relationship between a model's  confidence measure and the probability of incorrect inference, i.e., misclassification. These methods rely on exponential-weight algorithms that have $O(T^{2/3})$ regret bounds and a per-sample computational complexity of $O(N)$, where $N$, the number of distinct confidence measure values, can be  large. This paper investigates an open question: \textit{Can we develop HIL policies with lower computational complexity and improved regret guarantees by assuming a well-motivated structural relationship between the model's confidence measure and its likelihood of misclassification?}

\vspace*{-0.5 em}
\subsection{Our Contributions}
\vspace*{-0.5 em}
\subsubsection*{Modeling Contribution}
We model the probability of a correct inference by the Local-ML model for a sample as an (unknown) increasing function of the confidence measure output by the Local-ML model for that sample. This is motivated by our empirical studies on various tiny-ML models applied to multiple real-world datasets. This is a key departure from the existing literature on HIL.



\subsubsection*{Algorithmic and Analytical Contributions}  While our problem shares similarities with the contextual multi-arm bandit setting \cite{li2010contextual}, it diverges significantly due to its unique feedback structure. This distinction introduces challenges that require novel policies and a dedicated theoretical analysis. \color{black}

We propose two policies for the HIL problem based on the Upper Confidence Bound (UCB) framework \cite{ucb}, called HI using Lower Confidence Bound (HI-LCB) \color{black} and HI-LCB-lite. 
\begin{enumerate}[itemsep=0.5em]
\item[--] Both HI-LCB and HI-LCB-lite achieve $O(\log T)$ regret, where $T$ is the number of samples. This guarantee holds for both i.i.d. and adversarial arrivals, significantly improving upon the $O(T^{2/3})$ regret bounds of existing HIL policies \cite{alatat-hi,hilf,hedge-hi}. 
\item[--] We prove an $\Omega(\log T)$ lower bound on the regret of any online policy. Thus, HI-LCB and HI-LCB-lite have order-optimal regret with respect to the number of samples. 
\item[--] HI-LCB has space and computational complexity comparable to existing state-of-the-art policies, scaling linearly with the number of distinct confidence measure values. HI-LCB-lite has $O(1)$ computational complexity per sample, making it uniquely suited for deployment on end-devices with severe resource constraints.
\item[--] Through simulations on real-world datasets, we demonstrate that HI-LCB and HI-LCB-lite outperform existing HI policies for most parameter values.
\end{enumerate}

\subsection{Related Work}

Multiple distributed/collaborative inference approaches, including inference load balancing  \cite{wang2018,ogden2020,Fresa2023} and DNN partitioning \cite{kang2017,Li2020,hu2022} have been studied in recent years. Unlike DNN partitioning, HI does selective inference offloading, thereby enhancing the system’s responsiveness with minimal accuracy trade-off. Also, in contrast to inference load balancing \cite{wang2018,ogden2020,Fresa2023}, it does not naively balance the task load between the end-device and the edge-server.\color{black}

Early exit \cite{teerapittayanon2016branchynet} is a well-known technique used for reducing the inference latency by allowing a DL model to output predictions at intermediate
layers. Since HI operates with any off-the-shelf Local-ML and Remote-ML models, using early exit in either of these models complements HI, with the added benefit of reduced inference latency \cite{behera2025exploring}.

\color{black}

Under HI, all samples are first sent to the Local-ML model for inference. When the inference task is multi-class classification, the Local-ML model outputs a predicted class and a confidence measure for each sample. The offloading decision is made based on this confidence measure. For each sample, if the device accepts the local inference and it is correct, no loss occurs; otherwise, it incurs a loss of $1$ unit. If the device rejects the inference and offloads the sample to the Remote-ML model, it incurs an offloading loss. The existing HI offloading policies are threshold-based policies. In each round, the policy fixes a threshold, offloading samples only if the confidence measure of the Local-ML falls below it. The goal is to find the optimal value of the threshold to minimize the total loss incurred over $T$ samples. 

In \cite{hilf, hedge-hi}, this problem is formulated as a variant of the classical Prediction with Expert Advice (PEA) \cite{cesa2006book} with thresholds as experts. An online policy was proposed in \cite{hilf}, with a regret bound of $O(T^{2/3} (\log 1/\lambda_\text{min})^{1/3})$, where $\lambda_\text{min}$ is the minimum difference between any two distinct confidence measures observed for the $T$ samples. Noting that $\lambda_\text{min}$ is dataset dependent, the authors in \cite{hedge-hi} proposed Hedge-HI with $O(T^{2/3}N^{1/3})$ regret, where $N$ is an upper bound on the number of distinct confidence measure values. Furthermore, the proposed policies have $O(t)$ runtime per sample, where $t$ is the sample number. This becomes prohibitively large as $t$ increases, particularly for resource-constrained devices. 

In \cite{Beytur2024}, the authors studied HI in a system with multiple devices and explored maximizing the average accuracy under an average offloading cost constraint. This work provided a regret bound that depends on the inverse of the minimum value of the offloading cost.

In the absence of any correlation between the confidence measure and the likelihood of incorrect inference, the regret of any online HI offloading policy is $\Omega(\sqrt{T \log N})$ \cite{datta2025online} \color{black}.

Our problem shares similarities with the widely studied multi-armed bandit (MAB) framework \cite{lattimore2020bandit}, particularly the contextual bandit variant \cite{li2010contextual}. However, significant differences in the feedback model distinguish our setting from standard contextual bandits. Specifically, in the standard MAB setting, the learner observes the reward only for the pulled arm. In contrast, our setting exhibits a distinct structure: pulling a certain arm reveals the rewards of all arms, while pulling others reveals nothing. This key distinction necessitates new algorithm design and novel analysis techniques. We elaborate on these nuances in Sections \ref{sec:Terminology}, \ref{sec:results}, and \ref{sec:outlines}, following our formal problem definition.
\color{black}

\vspace*{-0.5 em}
\section{Setting}
\vspace*{-0.5 em}
\label{sec:Terminology}
\begin{figure*}[t]
    \centering
    \includegraphics[width=\textwidth]{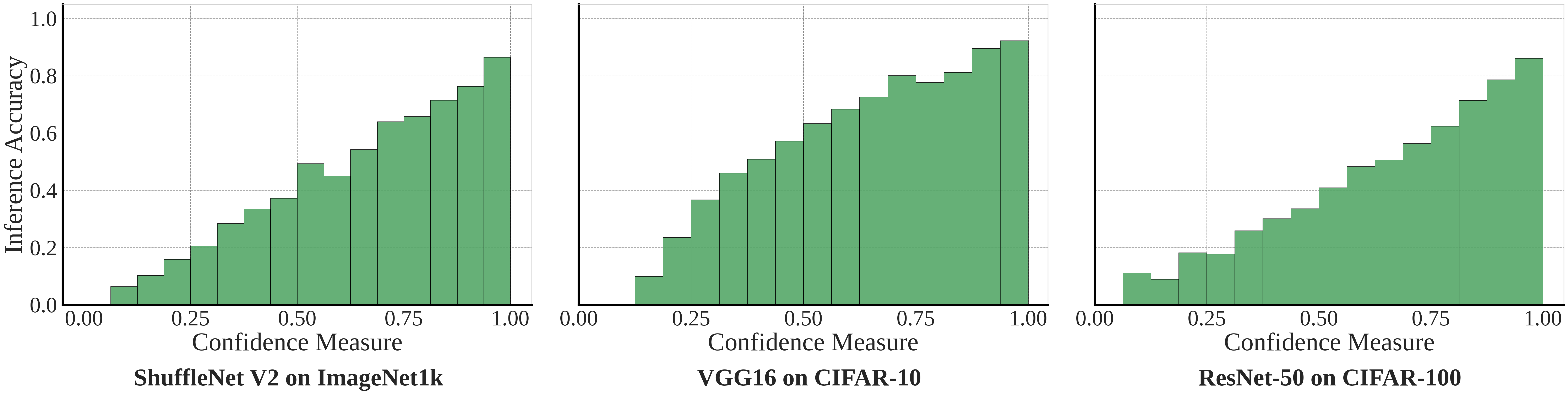}
    \caption{Empirical relationship between confidence measure (max softmax) and inference accuracy for multi-class classification. Accuracy increases with the value of the confidence measure, with rare exceptions.}
    \label{fig:model_accuracy}
    \vspace*{-0.2 in}
\end{figure*}

We consider an edge intelligence system consisting of an end-device and an edge-server, as illustrated in Figure~\ref{fig:model}. The end-device periodically collects data samples and performs a multi-class classification inference task on each sample. The edge intelligence system is equipped with two deep learning (DL) models for this task: a light-weight \textit{Local-ML} model deployed on the end-device and a larger and more accurate \textit{Remote-ML} model hosted on an edge-server.

Each data sample is first sent to the Local-ML for inference. The Local-ML model outputs a predicted class and a confidence measure for each sample. The end-device can either accept its prediction or decide to offload the sample to the larger model on the edge-server. This offload decision is made based on the confidence measure of the Local-ML model. If offloaded, the system uses the inference of the Remote-ML model hosted on the edge-server.
\vspace*{-0.5 em}
 \subsection{Inference Task}
 \vspace*{-0.5 em}
\subsubsection*{Multi-Class Classification}
Let $X$ represent the feature space from which the data samples are drawn, and $Y = \{1, \dots, m\}$ denotes the label set with $m$ classes. A classifier $h: X \to Y$ performs inference by computing $h(x)$ for a data sample $x$. Given a sample-label pair $(x, y)$, the classification error is defined as 
\begin{equation}
    \label{eq:error}
\eta(h,x)=\mathbb{1}{\{h(x) \neq y\}}.
\end{equation}
Consider a time-slotted system and let $x_t \in X$ denote the data sample that arrives at the system at time $t$. For each input sample, any DL model outputs scores $s_i(x_t)$ for each class $ i \in Y$. 
 The Local-ML model $h_l$ assigns the sample \( x_t \) to the class corresponding to the maximum score, i.e., the Local-ML model's inference is denoted by \[
     h_l(x_t) = \arg\max_{i \in Y} s_i(x_t). 
\]
 Similarly, when $x_t$ is offloaded to the server, the Remote-ML model $h_r$ returns a corresponding prediction. 
  
\subsubsection*{Confidence Measure}
A confidence measure quantifies a model's confidence in its prediction for a given data sample \cite{gawlikowski2023survey}. In general, the confidence measure for a sample $x_t$  is a deterministic function $g$ of the output scores $s(x_t)=[s_1(x_t), s_2(x_t), \dots, s_m(x_t)]$, and is defined as $g: s(x_t) \mapsto \Phi$, where $\Phi \subseteq [0, 1]$. 
In this work, we consider that the confidence measures from the Local-ML model are quantized into a finite set 
$
   \Phi=\{\phi_1,\phi_2,\dots \phi_K\}.
$
Without loss of generality, the confidence measure values are indexed in the increasing order, i.e. $0<\phi_1\leq\phi_2\leq\dots \phi_K\leq1$.

\begin{remark}
    Note that the output of the DL models on devices are quantized with the weights and the score values stored typically using in 4, 8, or 16 bits \cite{hubara2018quantized}.  For example, an 8-bit quantized ResNet8 \cite{banbury2021mlperf} has at most $256$ unique maximum soft-max values. Thus, $\Phi$ is finite in practice.  
\end{remark}
\color{black}
Our analytical results hold for any confidence measure, while in the numerical results presented in this work, we use the widely used maximum softmax value \cite{hendrycks2017} as the confidence measure. Let $\phi(t) \in \Phi$ denote the confidence measure  for the sample $x_t$, the max-softmax confidence is given by  
\[
\phi(t) = g(s(x_t))=  \max_i  \frac{\exp(s_i(x_t))}{\Sigma_{j=1}^{m} \exp(s_j(x_t))}.
\]
Recall that during classification, the sample \( x_t \) is assigned to the class corresponding to the highest score; thus, in this case, the confidence measure for a given sample is the softmax value corresponding to the predicted class. 

The relationship between confidence measures and the accuracy of DL-based models has been extensively studied in the literature on uncertainty estimation for deep learning models \cite{Seo2018LearningFS,mehrtash2019confidence,10.5555/3495724.3497255,gawlikowski2023survey}. However, prior work on hierarchical inference has not modeled this structure and therefore has not exploited it to make offloading decisions. 
We model the relationship between the Local-ML model's confidence measure and misclassification, and consequently its inference accuracy, as follows:
\begin{equation}
    \label{eq:struc}
    \Pr\left\{h_l(x_t) =y_t \mid \phi(t)\right\}= f(\phi(t)),
\end{equation}
 independent of all other samples.  Here $f(\cdot)$ is a \textit{non-decreasing} function bounded in $[0,1]$, representing the probability of accurate inference. 


 
Our structural assumption linking confidence measures to prediction accuracy  (given in \eqref{eq:struc}) is strongly supported by empirical evidence. A subset of the results of our empirical studies is presented in Figure \ref{fig:model_accuracy}. As shown in the figure, the inference accuracy $f(\cdot)$ steadily increases with the max-softmax confidence measure with rare exceptions. This trend is observed across both VGG16 \cite{vgg16} and ResNet-50  \cite{he2016deep} models, which serve as widely used baselines in computer vision, evaluated on the CIFAR-10 and CIFAR-100 datasets \cite{krizhevsky2009learning}, respectively. In addition, we also verified this trend for an extremely compact model, ShuffleNet V2 \cite{ma2018shufflenet}, which is commonly deployed on resource-constrained devices. The ShuffleNet V2 model was evaluated on the ImageNet1k dataset \cite{deng2009imagenet}, demonstrating similar patterns for the inference accuracy. 

\subsection{Algorithmic Challenge and Cost Model} At each time $t$, the system receives the Local-ML model’s inference for sample $x_t$, comprising a predicted class $h_l(x_t)$ and a confidence measure $\phi(t)$. Based on the confidence measure, the system chooses to either accept the local inference or offload the sample to the Remote-ML model.
The algorithmic task is to determine when to offload and when to accept. At time $t$, a policy $\pi$, outputs a decision $D_\pi(t)$ given by:
\[
\mathnormal{D_\pi(t)} =
\begin{cases}
0, & \text{Accept the Local-ML model's inference} \\
1, & \text{Offload task to the Remote-ML model}.
\end{cases}
\]


 A critical challenge in such systems is the unavailability of ground truth during deployment. We address this by using the output of the Remote-ML as a proxy for the ground truth, i.e., $y_t = h_r(x_t)$. Given that the Remote-ML model is a state-of-the-art, high-accuracy system, it serves as the definitive baseline in the absence of labeled data. This is a standard approach also adopted in \cite{hedge-hi}. \color{black}

When the system accepts the Local-ML model’s inference, it incurs a unit cost for each incorrect inference and zero cost if the inference is correct. 
For each sample that is offloaded to the Remote-ML, the system incurs an offloading corresponding to the communication overhead and/or latency. Let $\Gamma_t\in [0,1]$ denote the offloading cost at time $t$. We model  $\Gamma_t$ as an i.i.d. random variable with an unknown distribution with unknown mean $\gamma$. 
Let $L_t^{\pi}$ denote the cost incurred by the system at time $t$ under policy $\pi$. It follows that 
\begin{equation}
\label{eq:cost}
    L_t^{\pi} = \Gamma_t \cdot D_\pi(t) + \eta(h_l,x_t) \cdot (1-D_\pi(t)).
\end{equation}
We consider the setting where the function $f(\cdot)$ is unknown and the offloading cost is also a priori unknown to the system. Both the offloading cost and the inference of the Remote-ML model are revealed only when a sample is offloaded. Consequently, offloading is the sole method to estimate this cost and determine if the Local-ML model's inference matches the Remote-ML model's, which is essential for estimating $f(\cdot)$.

\begin{remark}
\label{rem:fc_defn}
We also consider the setting where $\Gamma_t$ is fixed and known as a separate special case. 
\end{remark}

While our analytical results and a subset of simulations use the output of the Remote-ML as a proxy for the ground truth, in Appendix B, we present simulation results where the actual ground truth serves as the baseline, explicitly accounting for potential Remote-ML inference errors while computing the cost incurred by the system. For this subset of simulation results, we use an appropriately modified cost function, which we define in Appendix B.
\color{black}

\subsection{Arrival Processes}
We consider a time-slotted system where the end-device collects one data sample per time-slot. We study two settings differentiated by the assumptions made on the arrival process of the samples and the corresponding confidence measure values. 

\subsubsection*{Adversarial Arrivals} In the adversarial setting, we make no structural assumptions on the arrival sequence of the confidence measure values for the local model's inferences, which is represented by $\sigma_T=\{\phi(t)\}_{1\leq t\leq T} \in \Phi^T$.

\subsubsection*{Stochastic Arrivals} In the stochastic setting, we make the following
assumption.

\begin{assumption} 
\label{as:stoch}
The confidence measure value for a Local-ML model's inference is independent and identically distributed across time. More specifically, for all $t \in \{1, 2, \dots, T\}$ and $i \in \{1, 2, \dots, |\Phi|\}$,
\[
    \Pr(\phi(t)=\phi_i)=w_i.
\]
\end{assumption} 
\begin{remark}
During the training of deep learning models, it is typically assumed that the data samples are drawn i.i.d.  from an unknown distribution over $X$ \cite{Shen2021TowardsOG}. Under the stochastic setting (Assumption \ref{as:stoch}), we extend this assumption to the inference phase, where samples arrive in an i.i.d. The i.i.d. assumption \cite{Beytur2024,hedge-hi} during inference is well-motivated in applications such as image classification or speech recognition systems, where user queries arrive independently. \color{black}  \end{remark}

\subsection{Performance Metrics}
\label{sec:perf}
\color{blue}
\color{black}

 We focus on a class of policies called threshold-based policies, studied in \cite{hilf, alatat-hi, hedge-hi}. A policy in this class maintains a scalar threshold value that can evolve over time. Once a sample is processed by the Local-ML model, the policy offloads the sample if and only if the Local-ML model's confidence measure for that sample is below the threshold. We compare the performance of our online threshold-based HI policy with the optimal static threshold policy, denoted by $\pi^*$, characterized in Section \ref{sec:policies}. The optimal static threshold policy has also been used as the benchmark in \cite{hilf,alatat-hi, hedge-hi}.

Under adversarial arrivals, the regret of a policy $\pi$, denoted by $R_\pi^A(T)$, is
defined as the maximum difference in the expected cumulative costs incurred by the policy $\pi$ and $\pi^\ast$ under any arrival sequence $\sigma_T \in \Phi^T$. Thus, 
\[
    R_\pi^A(T) = \max_{\sigma_T} \;\mathds{E}
    \left[\sum_{t=1}^T
L_t^{\pi} - \sum_{t=1}^T L_t^{\pi^{\ast}}\right],
\]
where the expectation is over the randomness in the policy $\pi$, the predictions of the Local-ML model being correct, and the offloading costs.

For stochastic arrivals, i.e., under Assumption \ref{as:stoch}, the regret for policy $\pi$, denoted by $ R_\pi^S(T)$, is defined as the difference in the expected cumulative costs incurred by the policies $\pi$ and $\pi^\ast$. Thus,
\[
    R_\pi^S(T)=\mathds{E}
    \left[\sum_{t=1}^T
L_t^{\pi} - \sum_{t=1}^T L_t^{\pi^{\ast}}\right],
\]
where the expectation is over the randomness in the policy $\pi$, the predictions of the Local-ML model being correct, the arrival sequence $\sigma_T$, and the offloading costs.

\begin{remark}
\label{rem:contextual}
 Our setting can be viewed as a variation of the contextual bandit problem \cite{li2010contextual}, using confidence measure ($\phi_t$) as the context and offload/accept as the arms. The fundamental difference is that standard contextual bandits reveal the pulled arm's reward in every round, whereas our setting reveals both arm rewards only upon offloading, and no information when accepting Local-ML inference. 
\end{remark}

\section{Our Policies: HI-LCB and HI-LCB-lite}
\label{sec:policies}
In this section, we propose two novel offloading policies. Consistent with real-world online settings, these policies do not have access to the true function $f(\cdot)$ and the average offloading cost $\gamma$; instead, the policies gradually learn these by selectively offloading samples to the server. Since offloading is the only means of obtaining feedback about inference accuracy and costs, it effectively serves as the mechanism for \textit{exploration}. 

Before discussing our proposed policies, we characterize the optimal static threshold policy. This policy, denoted by $\pi^\ast$, uses a fixed threshold value that minimizes the system's expected cost. Our policies aim to mimic this optimal static threshold policy while accounting for the uncertainty in $f(\cdot)$ and the expected offloading cost $\gamma$.

\begin{definition}
    We partition $\Phi$ into two sets: 
    \begin{align*}
        \Phi_H = \left\{ \phi_i \in \Phi \, : \, 1-f(\phi_i) < \gamma \right\} \text{ and }\Phi_L = \Phi \setminus \Phi_H.
        \end{align*}
\end{definition}
\vspace*{-0.1 in}
\begin{lemma}
\label{lem:opt}
    The policy $\pi^\ast$ makes offloading decisions as follows:
\[
\begin{aligned}
\label{decision}
 \mathnormal{D_{\pi^\ast}(t)}=\begin{cases} 0\;\text{(Accept)} 
      & \text{if $\phi(t) \in \Phi_H$},\\
    1\;\text{(Offload)}  & \text{otherwise}.

  \end{cases}
  \end{aligned}
\]
\end{lemma}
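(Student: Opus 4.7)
The plan is to establish two facts: (i) the rule described in the lemma is expressible as a static threshold rule, and (ii) no other static threshold rule achieves a smaller expected cumulative cost.

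For (i), I would exploit the monotonicity of $f(\cdot)$ assumed in \eqref{eq:struc}. Since by convention $\phi_1\leq\phi_2\leq\cdots\leq\phi_K$, the quantity $1-f(\phi_i)$ is non-increasing in $i$. Consequently, the set $\Phi_H=\{\phi_i : 1-f(\phi_i)<\gamma\}$ must be a suffix of $\Phi$: there exists an index $i^\ast\in\{1,\dots,K+1\}$ such that $\Phi_H=\{\phi_{i^\ast},\dots,\phi_K\}$ and $\Phi_L=\{\phi_1,\dots,\phi_{i^\ast-1}\}$. Hence the decision rule in the lemma coincides exactly with the static threshold policy that offloads iff $\phi(t)<\phi_{i^\ast}$, confirming it is in the class of threshold-based policies.

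For (ii), I would show the rule is pointwise optimal. Conditional on $\phi(t)=\phi_i$, the expected cost of offloading is $\mathds{E}[\Gamma_t]=\gamma$ since $\Gamma_t$ is independent of the sample, while the expected cost of accepting is $\Pr\{h_l(x_t)\neq y_t\mid\phi(t)=\phi_i\}=1-f(\phi_i)$ by \eqref{eq:struc}. Thus, for each individual $\phi_i$, accepting is strictly better than offloading precisely when $1-f(\phi_i)<\gamma$ (i.e., $\phi_i\in\Phi_H$), and offloading is (weakly) better when $\phi_i\in\Phi_L$. Therefore the rule in the lemma minimizes the conditional expected per-sample cost $\mathds{E}[L_t^\pi\mid\phi(t)]$ over all deterministic (and randomized) decision rules that depend only on $\phi(t)$. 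Summing over $t=1,\dots,T$ and taking expectations over the arrival sequence and offloading costs yields that $\pi^\ast$ minimizes the expected cumulative cost within the class of threshold-based policies.

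The only subtle point — and the reason the statement holds — is that the pointwise optimum over arbitrary $\phi(t)$-measurable rules happens to coincide with a threshold rule, which is precisely what step (i) establishes. Without monotonicity of $f(\cdot)$, the pointwise optimizer could be non-threshold, and the best static threshold policy would generally differ from the rule in the lemma; the argument rests entirely on this single structural property. I do not expect any routine computation to be an obstacle; the proof is essentially a per-sample comparison of two expected costs followed by a monotonicity argument.
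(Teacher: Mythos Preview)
Your proposal is correct and follows essentially the same approach as the paper: both compute the conditional expected per-sample costs $\mathds{E}[L_t^\pi\mid D_\pi(t)=0,\phi(t)]=1-f(\phi(t))$ and $\mathds{E}[L_t^\pi\mid D_\pi(t)=1,\phi(t)]=\gamma$, compare them to obtain the pointwise-optimal rule, and then invoke the monotonicity of $f(\cdot)$ to verify that this rule is a static threshold policy. Your presentation simply inverts the order (threshold structure first, optimality second), but the content is the same.
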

\vspace*{-0.1 in}
\begin{proof}
Under any policy $\pi$, given $\phi(t)$, the expected cost incurred on accepting the Local-ML model's inference is 
\begin{align*}
\mathds{E}[L_t^{\pi}\,|\,D_{\pi}(t)=0, \phi(t)] &=\mathds{E}[\eta(h_l,x_t)|\phi(t)]=1-f(\phi(t)),
\end{align*}
and the expected cost incurred on offloading the sample is 
\begin{align*}
\mathds{E}[L_t^{\pi}\,|\,D_{\pi}(t)=1, \phi(t)] &= \mathds{E}[\Gamma_t]= \gamma.   
\end{align*}
To minimize the expected cost, the Local-ML model's inference should be accepted in slot $t$ if and only if 
\begin{align}
    \mathds{E}[L_t^{\pi}\,|\,D_{\pi}(t)=0, \phi(t)]&<\mathds{E}[L_t^{\pi}\,|\,D_{\pi}(t)=1, \phi(t)] \nonumber \\
    \iff 1-f(\phi(t)) &< \gamma. 
\label{eq:optrule}
\end{align}
It follows that the cost incurred is minimized by offloading samples with confidence measures in $\Phi_L$ and accepting the local inference of samples with confidence measures in $\Phi_H$. Since $f(\cdot)$ is a non-decreasing function, it follows that for all $\phi_i \in \Phi_L$ and $\phi_j \in \Phi_H$, $\phi_i < \phi_j $. The policy that accepts the Local-ML model's inference if $\phi(t) \in \Phi_H$ and offloads otherwise is thus the optimal static threshold policy $\pi^{\ast}$.
\end{proof}

\subsection{HI-LCB}

Our first policy called HI-LCB (Algorithm \ref{alg:hilcb}), is a threshold-based online offloading strategy. \textit{HI-LCB leverages the non-decreasing relationship between confidence measures and expected inference accuracy to make offloading decisions}. 

Initially, the policy offloads all incoming samples to gather sufficient information on both accuracy and offloading costs. This exploration continues until the policy can, with high certainty, determine that a specific value of the confidence measure belongs to the high-confidence set $\Phi_H$. \color{black} The policy exploits the fact that $f(\cdot)$ is increasing to make this determination. \color{black} Once this determination is made, the policy stops offloading samples with that confidence measure and continues to accept local inferences until it becomes uncertain about the correctness of its decision. Conversely, for the remaining values of the confidence measure, the policy continues to offload all samples to the Remote-ML model on the server.

For each $\phi_i \in \Phi$, HI-LCB keeps track of the following quantities over time-slots indexed by $t$:
\begin{enumerate}
    \item[--] Number of offloads corresponding to $\phi_i$, given by $\mathsf{O}_{\phi_i}$.
    \item[--] An empirical estimator of $f(\phi_i)$ denoted by $\hat{f}(\phi_i)$.
    \item[--] The LCB of the $f(\phi_i)$, denoted by $\text{LCB}_{\phi_i}$, where:
\end{enumerate}
\begin{align}
\label{eq:LCB_def}
\text{LCB}_{\phi_i} &= \max_{\phi_j\leq \phi_i} \left\{ \hat{f}(\phi_j) - \sqrt{\frac{\alpha \log t}{\mathsf{O}_{\phi_j}}} \right\}. 
\end{align}

\begin{remark}
 The LCB definition in \eqref{eq:LCB_def} leverages the non-decreasing nature of $f(\cdot)$ by forcing $\text{LCB}_{\phi_i} \geq \text{LCB}_{\phi_j} $ for all $\phi_j \leq \phi_i$.
\end{remark} \color{black}

HI-LCB also maintains an empirical estimate and an LCB of the offloading cost, denoted by $\hat{\gamma}$ and $\text{LCB}_\gamma$ respectively, where $\text{LCB}_\gamma$ is defined as follows:
 \begin{equation}
 \label{eq:LCB2}
     \text{LCB}_\gamma=\hat{\gamma} - \sqrt{\frac{\alpha \log t}{\mathsf{O}_{\gamma}}},
 \end{equation}
 where $\mathsf{O}_{\gamma} = \sum_{\phi_i\in\Phi}\mathsf{O}_{\phi_i}$ is the total number of offloads across all confidence measures in $\Phi$.\color{black}

To mimic the optimal static threshold policy while accounting for the uncertainty in $f(\cdot)$ and $\gamma$, HI-LCB makes the offloading decision in the following manner:
\[
\mathnormal{D_\pi(t)}=\begin{cases}
    0 & \text{if $1 - \text{LCB}_{\phi(t)} < \text{LCB}_\gamma$},\\
    1 & \text{otherwise}.
  \end{cases}
\]
\begin{algorithm}[t]
\caption{HI-LCB }
\begin{algorithmic}[1]
\Require Data samples $\{x_t\}_{1 \leq t\leq T}$ and confidence set \( \Phi \) \color{black}
\Ensure Offload decisions $\{D_\pi(t)\}_{1 \leq t\leq T}$

\State \textbf{Parameters:} Exploration parameter \( \alpha \)

\State \textbf{Initialize:} $\hat{\gamma}=0$, \(\mathsf{O}_{\gamma} = 0 \), \(\mathsf{O}_{\phi_i} = 0 \), \( \hat{f}(\phi_i) = 0, \) $\forall  \phi_i \in \Phi $.
\For{$t = 1, 2, \dots$}
    \State Receive inference $h_l(x_t)$ and score $\phi(t) \in \Phi$    
    \State Compute
    $
    \text{LCB}_{\phi(t)} $ using \eqref{eq:LCB_def}
 and $\text{LCB}_\gamma$ using \eqref{eq:LCB2}.
    \If{$1 - \text{LCB}_{\phi(t)} \geq \text{LCB}_\gamma$ or $\mathsf{O}_{\phi(t)} = 0$}
        \State $D_\pi(t) = 1$ \hfill // Offload and obtain $y_t$ and $\Gamma_t$ 
         \State $\hat{f}(\phi(t)) \leftarrow \frac{\mathsf{O}_{\phi(t)} \cdot \hat{f}(\phi(t)) + 1-\eta(h_l,x_t)}{\mathsf{O}_{\phi(t)} + 1}$ 
        \State $\hat{\gamma} \leftarrow \frac{\mathsf{O}_\gamma \cdot \hat{\gamma} + \Gamma_t}{\mathsf{O}_\gamma + 1} $ 
        \State $\mathsf{O}_{\phi(t)} \leftarrow \mathsf{O}_{\phi(t)}+ 1$, $\mathsf{O}_{\gamma} \leftarrow \mathsf{O}_{\gamma}+ 1$
    \Else
        \State $D_\pi(t) = 0$ \hfill // Accept local model inference

    \EndIf
\EndFor
\end{algorithmic}
\label{alg:hilcb}
\end{algorithm}
\begin{remark}
    The HI-LCB policy requires $O(|\Phi|)$ operations per sample and has $O(|\Phi|)$ space complexity. 
\end{remark}
\subsection{HI-LCB-lite}
A computational complexity of $O(|\Phi|)$ operations per sample may be prohibitively large for resource-constrained end-devices. For such situations, we propose our second policy called HI-LCB-lite. HI-LCB-lite follows the same steps as HI-LCB except that the LCB definition in \eqref{eq:LCB_def} is replaced with 
    \begin{align}
    \label{eq:vanillaLCB}
      \text{LCB}_{\phi_i} =    \hat{f}(\phi_i) - \sqrt{\frac{\alpha \log t}{\mathsf{O}_{\phi_i}}}. 
    \end{align}

\begin{remark}
    Unlike HI-LCB, HI-LCB-lite does not exploit the non-decreasing nature of $f(\cdot)$ in the LCB definition and therefore, requires $O(1)$ operations per sample, making it well-suited for deployment on resource-constrained devices. 
\end{remark}  
\color{black}
\begin{remark}
While HI-LCB is inherently a threshold-based policy due to the max operator in its definition of $\text{LCB}_{\phi_i}$ in \eqref{eq:LCB_def}, HI-LCB-lite may not always exhibit threshold-based offloading behavior. 
\end{remark}
\begin{remark}[Fixed Offloading Cost]
\label{rem:fc}
    For the special case where the offloading cost is fixed and a priori known, i.e., $\Gamma_t = \gamma$, $\forall \,t$, the $\text{LCB}_{\gamma}$ definition in \eqref{eq:LCB2} is replaced with $\text{LCB}_{\gamma} = \gamma$ for both HI-LCB and HI-LCB-lite.
\end{remark}

\color{black}
     \section{Analytical Results and Discussion}
     \label{sec:results}
In this section, we present our analytical results. The proof outlines of our results are presented in Section \ref{sec:outlines}, with detailed proofs provided in Appendix A. 

While our setting shares similarities with the contextual multi-armed bandit (MAB) framework, our distinct feedback structure introduces a critical challenge. In the classical UCB framework for MABs, the learning process is naturally self-correcting: if noisy samples initially cause a sub-optimal arm to appear optimal, the learner pulls that arm, generates new data, and corrects the estimate. This eventually reveals the arm's sub-optimality and reduces regret.

In contrast, our setting lacks this self-correcting mechanism due to the nature of the feedback. Crucially, if the learner accepts the Local-ML inference, no further feedback is observed. Therefore, if initial noise leads to an incorrect acceptance for a specific confidence value, the learner never receives the data required to correct the error. This can cause the learner to get permanently stuck on the wrong arm, leading to linear regret. This fundamental difference necessitates an independent analysis to upper bound the probability of such `bad events,' a step not required in standard contextual MAB analysis. 
\color{black}

To simplify the following discussion, we define:
\begin{itemize}[itemsep=0.5em]
    \item[--] $\Delta_{\phi_i}=|1-f(\phi_i)-\gamma|$,
    \item[--] $\Delta = [\Delta_{\phi_1},\Delta_{\phi_2}\dots\Delta_{\phi_{|\Phi|}}]$,
    \item [--]$\Phi_H^{(i)}= \left\{ \phi_j \in \Phi_H \mid \phi_j \leq \phi_i \right\}$,
     \item[--] { $\displaystyle \mathbf{C_1}(\Delta)  = \sum_{\phi_i \in \Phi_H} \;\left( \frac{4\alpha \Delta_{\phi_i}}{2\alpha-1}  \right)+ \max_{ \Phi_L}\; 2\alpha\Delta_{\phi_j} \left(\frac{|\Phi_L|+1 }{{2\alpha-1}}\right)$},
    \item[--] {$ \displaystyle\mathbf{C_2}(\Delta) = \frac{2\alpha }{2\alpha-1} \left(\sum_{\phi_i \in\Phi_H} \;\Delta_{\phi_i} + |\Phi_L|\max_{ \Phi_L}\,\Delta_{\phi_j} \right)$},
    \item[--] {\small $\displaystyle \mathbf{C_3}(\Delta)=\sum_{ \phi_i \in \Phi_H}\frac{4\alpha\Delta_{\phi_i} }{2\alpha-1}  \min_{\Phi^{(i)}_H}\left(   \frac{ w_i}{w_j} \right)+\max_{ \Phi_L}\; 2\alpha\Delta_{\phi_j} \left(\frac{|\Phi_L|+1 }{{2\alpha-1}}\right),$}
    \item[--] {$ \displaystyle\mathbf{C_4}(\Delta) = \frac{2\alpha }{2\alpha-1} \left(\sum_{\phi_i \in\Phi_H} \;\min_{\Phi^{(i)}_H}\left(   \frac{ w_i\Delta_{\phi_i}}{w_j} \right) + |\Phi_L|\max_{ \Phi_L}\,\Delta_{\phi_j} \right)$}.
\end{itemize}
\color{black}
Our first theorem provides regret guarantees for HI-LCB and HI-LCB-lite for adversarial arrivals. 
\begin{theorem} [Adversarial Arrivals] 
    \label{thm:adv}
   Under adversarial arrivals and i.i.d. offloading costs, for any $\alpha> 0.5$:
\begin{enumerate}
    \item[(a)] $\displaystyle R_{\text{HI-LCB}}^A(T) \leq \sum_{\phi_i \in \Phi_H} \; \frac{16\alpha}{\Delta_{\phi_i}}\log T\; +\mathbf{C_1}(\Delta)$, 
    \item[(b)] $\displaystyle R_{\text{HI-LCB-lite}}^A(T) \leq \sum_{\phi_i \in \Phi_H} \; \frac{16\alpha}{\Delta_{\phi_i}}\log T\; +\mathbf{C_1}(\Delta)$.
\end{enumerate}
Further, for the special case with fixed and a priori known offloading cost discussed in Remark \ref{rem:fc_defn} and Remark \ref{rem:fc},
\begin{enumerate}
    \item[(c)] $\displaystyle R_{\text{HI-LCB}}^A(T) \leq \sum_{\phi_i \in \Phi_H} \; \frac{4\alpha}{\Delta_{\phi_i}}\log T\; +\mathbf{C_2}(\Delta)$, 
    \item[(d)] $\displaystyle R_{\text{HI-LCB-lite}}^A(T) \leq \sum_{\phi_i \in \Phi_H} \; \frac{4\alpha}{\Delta_{\phi_i}}\log T\; +\mathbf{C_2}(\Delta)$.
\end{enumerate}
\end{theorem}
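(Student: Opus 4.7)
The plan is to decompose the regret into per-confidence-value suboptimality counts and then invoke UCB-style concentration. By Lemma \ref{lem:opt}, $\pi^\ast$ accepts whenever $\phi(t)\in\Phi_H$ and offloads whenever $\phi(t)\in\Phi_L$; the per-round gap incurred when the online policy takes the opposite action at a sample of confidence $\phi_i$ is exactly $\Delta_{\phi_i}$. Hence for any adversarial arrival sequence $\sigma_T$,
\begin{equation*}
R_\pi^A(T) \le \sum_{\phi_i \in \Phi_H} \Delta_{\phi_i}\, \mathds{E}[N^o_{\phi_i}(T)] + \sum_{\phi_i \in \Phi_L} \Delta_{\phi_i}\, \mathds{E}[N^a_{\phi_i}(T)],
\end{equation*}
where $N^o_{\phi_i}(T)$ (resp.\ $N^a_{\phi_i}(T)$) counts rounds at which the policy offloads (resp.\ accepts) a sample of confidence $\phi_i$.

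To bound $\mathds{E}[N^o_{\phi_i}(T)]$ for $\phi_i\in\Phi_H$, I set $\ell_i = \lceil 16\alpha\log T/\Delta_{\phi_i}^2\rceil$. Whenever $\mathsf{O}_{\phi_i}\ge\ell_i$ the confidence radius $\sqrt{\alpha\log t/\mathsf{O}_{\phi_i}}$ is at most $\Delta_{\phi_i}/4$, and the same upper bound holds on the cost-side radius since $\mathsf{O}_\gamma = \sum_j\mathsf{O}_{\phi_j}\ge\mathsf{O}_{\phi_i}$. Thus the offload condition $\text{LCB}_{\phi_i}+\text{LCB}_\gamma\le 1$ can fire only if $\hat{f}(\phi_i)-f(\phi_i)\le -\Delta_{\phi_i}/4$ or $\hat{\gamma}-\gamma\le -\Delta_{\phi_i}/4$. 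Conditioning on the realized value of $\mathsf{O}_{\phi_i}(t)$ and applying Hoeffding's inequality bounds each such event by $t^{-2\alpha}$, and $\sum_{t\ge 1}t^{-2\alpha}\le 2\alpha/(2\alpha-1)$ for $\alpha>1/2$. This produces the leading $\tfrac{16\alpha}{\Delta_{\phi_i}}\log T$ term together with the $\tfrac{4\alpha\Delta_{\phi_i}}{2\alpha-1}$ summand of $\mathbf{C_1}(\Delta)$.

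For $\phi_i\in\Phi_L$ we have $f(\phi_i)+\gamma<1$, so under the ``good'' events $\text{LCB}_{\phi_i}\le f(\phi_i)$ and $\text{LCB}_\gamma\le\gamma$ the accept condition $\text{LCB}_{\phi_i}+\text{LCB}_\gamma>1$ cannot hold; every mistaken accept therefore requires a concentration failure. Summing the failure probability over $t$ with a single union bound at each $t$ over the $|\Phi_L|$ possible bad events for $\hat{f}$ plus one for $\hat{\gamma}$ gives $\sum_{\phi_i\in\Phi_L}\mathds{E}[N^a_{\phi_i}(T)]\le (|\Phi_L|+1)\cdot 2\alpha/(2\alpha-1)$, which after bounding $\Delta_{\phi_i}$ by $\max_{\phi_j\in\Phi_L}\Delta_{\phi_j}$ yields the second summand of $\mathbf{C_1}(\Delta)$. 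For HI-LCB, the max operator in \eqref{eq:LCB_def} only raises $\text{LCB}_{\phi_i}$: for $\phi_i\in\Phi_H$ this only suppresses offloading and preserves the bound above verbatim, and for $\phi_i\in\Phi_L$ the same union-bound argument still applies since any $\phi_j\le\phi_i$ satisfies $f(\phi_j)\le f(\phi_i)$.

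Parts (c) and (d) follow by repeating the argument with $\text{LCB}_\gamma$ replaced by the exact value $\gamma$ (Remark \ref{rem:fc}). With $\hat{f}(\phi_i)$ the sole random estimate, the offload condition at $\mathsf{O}_{\phi_i}\ge\ell_i$ reduces to $\hat{f}(\phi_i)-f(\phi_i)\le -\Delta_{\phi_i}/2$; the allowed radius doubles, so $\ell_i=4\alpha\log T/\Delta_{\phi_i}^2$ suffices and only one source of bad events remains, giving the improved leading coefficient $4\alpha$ and the reduced constant $\mathbf{C_2}(\Delta)$. The main obstacle is to justify the pointwise Hoeffding bound when $\mathsf{O}_{\phi_i}(t)$ is a random, history-dependent sample count rather than a fixed index; this is handled by conditioning on its realized value, applying Hoeffding pointwise, and then summing over $t$ using $\sum_t t^{-2\alpha}<\infty$ for $\alpha>1/2$.
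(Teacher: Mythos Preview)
Your proposal is correct and follows essentially the same route as the paper: the same regret decomposition into $\Phi_H$ (bounding expected offloads via a Hoeffding-based threshold on $\mathsf{O}_{\phi_i}$) and $\Phi_L$ (bounding expected accepts via a union bound over the $|\Phi_L|+1$ concentration events), and the same reduction for parts (c)--(d) once $\text{LCB}_\gamma$ is replaced by $\gamma$. The only cosmetic difference is that you freeze the offload threshold at $\ell_i=16\alpha\log T/\Delta_{\phi_i}^2$, whereas the paper splits on the time-varying event $H_{i,t}:\mathsf{O}_{\phi_i}(t)>16\alpha\log t/\Delta_{\phi_i}^2$; both lead to the same constants, and your explicit remark on the random-sample-count subtlety matches the level of rigor in the paper's use of Hoeffding.
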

Theorem \ref{thm:adv} establishes the first $O(\log T)$ regret bound for the HIL problem. This marks a significant improvement over the current state-of-the-art $O(T^{2/3})$ guarantees achieved by {HIL-F}\cite{hilf} and {Hedge-HI}\cite{hedge-hi}. We achieve this by modeling a direct relationship between the Local-ML model's confidence measures and its probability of misclassification, which the HI-LCB and HI-LCB-lite policies leverage to make offloading decisions. Note that for adversarial arrivals, our regret bounds for HI-LCB and HI-LCB-lite are identical. Furthermore, the guarantees for known fixed offloading costs are tighter than those for i.i.d. offloading costs with an unknown mean.
\begin{remark}
\label{rem:info}
Existing HI offloading policies, such as {HIL-F} \cite{hilf} and {Hedge-HI} \cite{hedge-hi}, assume that offloading costs are revealed to the system even when it chooses to accept the local inference. HI-LCB and HI-LCB-lite, in contrast, operate under a more limited information structure, i.e., the offloading cost is revealed only if the system decides to offload, and yet have much stronger performance guarantees.
\end{remark}
\color{black}
Our next theorem provides regret guarantees for HI-LCB and HI-LCB-lite for i.i.d. stochastic arrivals. 
\begin{theorem}[Stochastic Arrivals]
    \label{thm:stoch}
     Under i.i.d. stochastic arrivals (Assumption \ref{as:stoch}) and i.i.d. offloading costs, for any constant $\alpha> 0.5$:
\begin{enumerate}
    \item[(a)] $\displaystyle R_{\text{HI-LCB}}^S(T) \leq \sum_{\phi_i \in \Phi_H} \; \min_{\phi_j \in\Phi_H^{(i)}} \left\{\frac{16\alpha w_i \Delta_{\phi_i} }{w_j\Delta^2_{\phi_j}}\;\right\}\log T+ \mathbf{C_3}(\Delta)$, 
    \item[(b)] $\displaystyle R_{\text{HI-LCB-lite}}^S(T) \leq \sum_{\phi_i \in \Phi_H} \; \frac{16\alpha}{\Delta_{\phi_i}}\log T\; +\mathbf{C_1}(\Delta)$.
\end{enumerate}
Further, for the special case with fixed and a priori known offloading cost discussed in Remark \ref{rem:fc_defn} and Remark \ref{rem:fc},
\begin{enumerate}
    \item[(c)] $\displaystyle R_{\text{HI-LCB}}^S(T) \leq \sum_{\phi_i \in \Phi_H} \; \min_{\phi_j \in\Phi_H^{(i)}} \left\{\frac{4\alpha w_i \Delta_{\phi_i} }{w_j\Delta^2_{\phi_j}}\;\right\}\log T+  \mathbf{C_4}(\Delta)$, 
    \item[(d)] $\displaystyle R_{\text{HI-LCB-lite}}^S(T) \leq \sum_{\phi_i \in \Phi_H} \; \frac{4\alpha}{\Delta_{\phi_i}}\log T\; +\mathbf{C_2}(\Delta)$.
\end{enumerate}
\end{theorem}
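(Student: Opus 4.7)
The plan is to bound the expected regret as the sum, over each confidence value $\phi_i$, of the per-round regret $\Delta_{\phi_i}$ times the expected number of suboptimal decisions on $\phi_i$ samples. By Lemma~\ref{lem:opt}, the suboptimal action is \emph{offload} on $\phi_i \in \Phi_H$ and \emph{accept} on $\phi_i \in \Phi_L$, so
\[
R^S_{\pi}(T) = \sum_{\phi_i \in \Phi_H} \Delta_{\phi_i}\,\mathds{E}[N_i^{\text{off}}(T)] + \sum_{\phi_i \in \Phi_L} \Delta_{\phi_i}\,\mathds{E}[N_i^{\text{acc}}(T)].
\]
The analysis rests on two-sided Hoeffding concentration for $\hat f(\phi_j)$ and $\hat\gamma$: with $\epsilon = \sqrt{\alpha\log t/N}$, each one-sided confidence bound fails at time $t$ with probability at most $t^{-2\alpha}$, and a union bound over $t$ is summable exactly because $\alpha > 1/2$. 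All failure-event contributions are absorbed into the constants $\mathbf{C_3}(\Delta)$ (resp.\ $\mathbf{C_4}$, $\mathbf{C_1}$, $\mathbf{C_2}$) in the four parts.

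For HI-LCB in parts (a) and (c), the key is the \emph{anchor} technique enabled by the monotone LCB in (\ref{eq:LCB_def}): for each $\phi_i \in \Phi_H$ and each anchor $\phi_j \in \Phi_H^{(i)}$,
\[
\text{LCB}_{\phi_i} \ge \hat f(\phi_j) - \sqrt{\alpha\log t/\mathsf{O}_{\phi_j}}.
\]
On the good event this yields $\text{LCB}_{\phi_i} \ge f(\phi_j) - 2\sqrt{\alpha\log t/\mathsf{O}_{\phi_j}}$ and $\text{LCB}_\gamma \ge \gamma - 2\sqrt{\alpha\log t/\mathsf{O}_\gamma}$, so a decision to offload at time $t$ forces
\[
2\sqrt{\alpha\log t/\mathsf{O}_{\phi_j}} + 2\sqrt{\alpha\log t/\mathsf{O}_\gamma} > \Delta_{\phi_j}.
\]
Thus either $\mathsf{O}_{\phi_j} < 16\alpha\log t/\Delta_{\phi_j}^2$ or $\mathsf{O}_\gamma < 16\alpha\log t/\Delta_{\phi_j}^2$; in the fixed-cost setting the $\text{LCB}_\gamma$ error vanishes, tightening the threshold to $4\alpha\log t/\Delta_{\phi_j}^2$. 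Until the cutoff is reached, HI-LCB offloads every arrival, so under Assumption~\ref{as:stoch} the expected number of rounds required is at most $16\alpha\log T/(w_j\Delta_{\phi_j}^2)$, during which the expected number of $\phi_i$ arrivals is $(w_i/w_j)\cdot 16\alpha\log T/\Delta_{\phi_j}^2$, each contributing $\Delta_{\phi_i}$ of regret. Taking the minimum over $\phi_j \in \Phi_H^{(i)}$ recovers the $\log T$ coefficient in part~(a); part~(c) is analogous with $16\alpha$ replaced by $4\alpha$.

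For HI-LCB-lite in parts (b) and (d), the LCB in (\ref{eq:vanillaLCB}) does not couple different confidence values, so the anchor must be $\phi_i$ itself. A standard UCB argument then bounds $\mathds{E}[N_i^{\text{off}}(T)] \le 16\alpha\log T/\Delta_{\phi_i}^2$ (or $4\alpha\log T/\Delta_{\phi_i}^2$ when $\gamma$ is known), independent of the arrival rates $w_i$; multiplying by $\Delta_{\phi_i}$ and summing over $\phi_i \in \Phi_H$ reproduces the adversarial bounds of Theorem~\ref{thm:adv}(b) and (d). The bad-accept counts $N_i^{\text{acc}}(T)$ for $\phi_i \in \Phi_L$ are handled symmetrically: on the good event the condition $1 - \text{LCB}_{\phi_i} < \text{LCB}_\gamma$ can hold only finitely many times because $1-f(\phi_i)-\gamma = \Delta_{\phi_i} \ge 0$, yielding the $\max_{\Phi_L}\Delta_{\phi_j}$ contributions inside the constants.

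The main obstacle is the coupled growth of $\mathsf{O}_{\phi_j}$ and $\mathsf{O}_\gamma$ in part~(a): $\mathsf{O}_{\phi_j}$ advances only on an offload of $\phi_j$, whose trigger depends on $\text{LCB}_{\phi_j}$ (which itself uses $\mathsf{O}_{\phi_j}$ and estimates for even smaller confidence values) and on $\text{LCB}_\gamma$ (whose growth is tied to total offloads across \emph{all} confidence values). Rigorously arguing that, before the cutoff is crossed, HI-LCB offloads every arriving $\phi_j$ — so that $\mathsf{O}_{\phi_j}$ faithfully tracks the $\phi_j$-arrival counting process with rate $w_j$ — is the delicate step, and it must be combined with a union bound over $t$ to keep the good event uniformly valid over the horizon while controlling the residual tail sums by a finite constant depending only on $\alpha$ and $\Delta$.
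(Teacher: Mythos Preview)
Your regret decomposition is the paper's, and your treatment of HI-LCB-lite in parts (b) and (d) is exactly right: the adversarial bounds of Theorem~\ref{thm:adv} are arrival-sequence-free, so they carry over verbatim to stochastic arrivals.

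For HI-LCB in parts (a) and (c), the anchor idea is correct, but the operational argument you sketch---wait until $\mathsf{O}_{\phi_j}$ crosses $16\alpha\log T/\Delta_{\phi_j}^2$, and count $\phi_i$-arrivals in the meantime---has the gap you yourself flag, and it is not merely a technicality. The claim ``before the cutoff, HI-LCB offloads every arriving $\phi_j$'' is \emph{false}: if some $\phi_k < \phi_j$ with $\phi_k \in \Phi_H$ has already accumulated many offloads, the $\max$ in (\ref{eq:LCB_def}) can push $\text{LCB}_{\phi_j}$ high enough that $\phi_j$ is (correctly) accepted while $\mathsf{O}_{\phi_j}$ stays small forever. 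So tracking cutoff-crossing times for $\mathsf{O}_{\phi_j}$ does not yield a bound.

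The paper dissolves this obstacle with a one-line factorization instead of a temporal argument. Write $\Lambda_{i,t}$ for the event $\{1-\text{LCB}_{\phi_i}(t) \ge \text{LCB}_\gamma(t)\}$. Since $\Lambda_{i,t}$ is determined by the history \emph{before} slot $t$ while $\phi(t)$ is i.i.d., the two are independent, giving
\[
\mathds{E}[\mathsf{O}_{\phi_i}(T)] \;=\; \sum_{t=1}^T \Pr\bigl(\Lambda_{i,t},\,\phi(t)=\phi_i\bigr) \;=\; w_i \sum_{t=1}^T \Pr(\Lambda_{i,t}).
\]
The monotone LCB in (\ref{eq:LCB_def}) yields the pointwise inclusion $\Lambda_{i,t} \subseteq \Lambda_{j,t}$ for every $\phi_j \le \phi_i$, hence
\[
\mathds{E}[\mathsf{O}_{\phi_i}(T)] \;\le\; w_i \sum_{t} \Pr(\Lambda_{j,t}) \;=\; \frac{w_i}{w_j}\,\mathds{E}[\mathsf{O}_{\phi_j}(T)] \;\le\; \frac{w_i}{w_j}\Bigl(\frac{16\alpha\log T}{\Delta_{\phi_j}^2} + \frac{4\alpha}{2\alpha-1}\Bigr),
\]
the last step being the adversarial bound (Lemma~\ref{lem:Offload-H}), which applies to the stochastic sequence as a special case. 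Minimizing over $\phi_j \in \Phi_H^{(i)}$ gives the $\log T$ coefficient in part~(a); replacing $16\alpha$ by $4\alpha$ via Lemma~\ref{lem:Offload-H2} gives part~(c). The $\Phi_L$ contribution is handled exactly as you say, by the adversarial accept bound. No claim about when cutoffs are crossed, or whether every $\phi_j$-arrival is offloaded, is ever needed.
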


The stochastic arrival process enables HI-LCB to effectively utilize the non-decreasing nature of $f(\cdot)$, thereby reducing regret incurred due to the confidence measures in $\Phi_H$ that are seen rarely. Specifically, if some $\phi_j < \phi_i$ already has its LCB above the decision threshold, the policy accepts inferences with confidence measure $\phi_i \in \Phi_H$, even if $\phi_i$ has not yet accumulated sufficient observations on its own. Therefore, under Assumption \ref{as:stoch}, HI-LCB enjoys a tighter performance guarantee compared to that for adversarial arrivals in Theorem \ref{thm:adv}. Since HI-LCB-lite does not exploit the non-decreasing nature of $f(\cdot)$, its regret bounds for stochastic arrivals are identical to those for adversarial arrivals. For both policies, the guarantees for known fixed offloading costs are tighter than those for i.i.d. offloading costs with an unknown mean.

Our next result characterizes a lower bound on the regret of any online HI offloading policy. 
\begin{theorem}[Lower Bound on Regret] 
\label{thm:lb}
   Let $\mathcal{A}$ denote the class of all Hierarchical Inference policies that do not assume prior knowledge of the inference accuracy mapping $f(\cdot)$ or the mean offloading cost $\gamma$. Then, for any such policy $\pi \in \mathcal{A}$, the adversarial regret over time  $T$ satisfies:
\[
  \min_{\pi \in \mathcal{A}} \max_{\Phi,\gamma} R_\pi^A(T) = \Omega(\log T).  
\]
\end{theorem}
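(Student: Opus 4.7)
The plan is to reduce to the classical change-of-measure lower bound for two-armed stochastic bandits. Since the outer maximum in Theorem~\ref{thm:lb} is over problem instances, it suffices to exhibit two instances such that \emph{any} policy $\pi\in\mathcal A$ incurs $\Omega(\log T)$ regret on at least one of them. I restrict attention to a singleton confidence set $\Phi=\{\phi\}$, so each round the policy chooses between \emph{accept} (expected instantaneous cost $1-f(\phi)$, no feedback) and \emph{offload} (expected instantaneous cost $\gamma$, revealing a Bernoulli$(1-f(\phi))$ correctness bit and an i.i.d.\ draw of $\Gamma_t$). For a small constant $\epsilon>0$, define $\mathcal I_+$ by $\gamma=1/2$, $f(\phi)=1/2+\epsilon$, and $\mathcal I_-$ by $\gamma=1/2$, $f(\phi)=1/2-\epsilon$. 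By Lemma~\ref{lem:opt}, $\pi^\ast$ always accepts under $\mathcal I_+$ and always offloads under $\mathcal I_-$.

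Let $N_T^\pi$ denote the number of offloads in $T$ rounds. A per-round gap of $\epsilon$ in both instances gives
\[
    R_\pi^A(T)\big|_{\mathcal I_+}=\epsilon\,\mathds{E}_{\mathcal I_+}[N_T^\pi],\qquad R_\pi^A(T)\big|_{\mathcal I_-}=\epsilon\,\bigl(T-\mathds{E}_{\mathcal I_-}[N_T^\pi]\bigr).
\]
The two instances share their $\Gamma_t$ distribution and differ only in the Bernoulli correctness samples produced by offloads, so the divergence decomposition lemma yields
\[
    \mathrm{KL}\bigl(\mathds{P}_{\mathcal I_+}^{\pi,T}\,\|\,\mathds{P}_{\mathcal I_-}^{\pi,T}\bigr)=\mathds{E}_{\mathcal I_+}[N_T^\pi]\cdot \mathrm{KL}\bigl(\tfrac12+\epsilon\,\|\,\tfrac12-\epsilon\bigr)\le C\epsilon^2\,\mathds{E}_{\mathcal I_+}[N_T^\pi],
\]
for a universal constant $C>0$. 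Applying the Bretagnolle-Huber inequality to the event $\{N_T^\pi>T/2\}$, followed by Markov on both measures, gives
\[
    \mathds{E}_{\mathcal I_+}[N_T^\pi]+\mathds{E}_{\mathcal I_-}[T-N_T^\pi]\;\ge\;\tfrac{T}{4}\exp\bigl(-C\epsilon^2\,\mathds{E}_{\mathcal I_+}[N_T^\pi]\bigr).
\]

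Treating $\epsilon$ as a fixed constant, a dichotomy closes the argument: if $\mathds{E}_{\mathcal I_+}[N_T^\pi]\ge \tfrac{\log T}{2C\epsilon^2}$, then $R_\pi^A(T)|_{\mathcal I_+}\ge \tfrac{\log T}{2C\epsilon}=\Omega(\log T)$; otherwise the exponential in the previous display exceeds $T^{-1/2}$, forcing $\mathds{E}_{\mathcal I_+}[N_T^\pi]+\mathds{E}_{\mathcal I_-}[T-N_T^\pi]\ge T^{1/2}/4$, so $\max\{R_\pi^A(T)|_{\mathcal I_+},R_\pi^A(T)|_{\mathcal I_-}\}\ge \epsilon\sqrt T/8=\omega(\log T)$. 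Taking the maximum over $\{\mathcal I_+,\mathcal I_-\}$ produces the claimed bound. The main obstacle is justifying the divergence decomposition under the asymmetric feedback model, in which the accept action produces no observation at all; however, this asymmetry can only \emph{strengthen} the lower bound, since accepts leak no samples capable of distinguishing $\mathcal I_+$ from $\mathcal I_-$, and the KL between $T$-round observation laws therefore accumulates exactly over the offload rounds as written above.
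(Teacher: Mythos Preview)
Your proof is correct and shares the paper's overall skeleton---reduction to a singleton $\Phi$, divergence decomposition over offload rounds only, and Bretagnolle--Huber---but differs in two places worth noting. First, the paper keeps $f(\phi_1)$ fixed and perturbs the Bernoulli offloading-cost mean $\gamma$ across two measures $\mathbb P_\gamma$ and $\mathbb P_{\gamma'}$ with $\gamma'\le 1-f(\phi_1)$, whereas you keep $\gamma=1/2$ fixed and perturb $f(\phi)$; either choice is admissible since the policy knows neither quantity. Second, and more substantively, the paper restricts attention to policies that already achieve sublinear regret with high probability and then sets the failure probability $\delta=1/T$ in the style of Lai--Robbins, while your dichotomy (either $\mathds{E}_{\mathcal I_+}[N_T^\pi]\ge \log T/(2C\epsilon^2)$ and you are done on $\mathcal I_+$, or the Bretagnolle--Huber display forces $\Omega(\sqrt T)$ regret on one of the two instances) sidesteps any such restriction and delivers the minimax statement against \emph{all} policies directly. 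Your route is therefore a bit more general and self-contained; the paper's route ties more closely to the classical asymptotic bandit lower bound and surfaces the instance-dependent constant $\Delta_{\phi_1}/D_B(\gamma\,\|\,1-f(\phi_1))$ in front of $\log T$. Your closing remark about the asymmetric feedback is also the right way to handle it: accept rounds contribute zero KL, so the decomposition is exactly $\mathds{E}_{\mathcal I_+}[N_T^\pi]\cdot\mathrm{KL}(\tfrac12+\epsilon\,\|\,\tfrac12-\epsilon)$, which is all you need.
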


We thus conclude that HI-LCB and HI-LCB-lite have \textit{order-optimal} $O(\log T)$ regret performance with respect to the number of samples, i.e., the number of time-slots. 

 \color{black}
    \section{Proof Outlines}
    \label{sec:outlines}
Here, we present proof outlines for the results presented in Section \ref{sec:results}. The detailed proofs are in the appendix. 
Although there are similarities between our problem and contextual bandits (Remark \ref{rem:contextual}), our proofs leverage the specific structure of our problem and are entirely independent of the proofs for contextual bandits.

    \subsection{Proof of Theorem \ref{thm:adv}}
We first present the outline of the proof of Theorem \ref{thm:adv}(a) and Theorem \ref{thm:adv}(b). 

For a given confidence measure $\phi_i$, let $A_{\phi_i}(T)$ be the number of samples for which local inference is accepted up to time $T$, and $\mathsf{O}_{\phi_i}(T)$ be the number of samples offloaded by time $T$.
Recall that $\Delta_{\phi_i} = |1-f(\phi_i)-\gamma|$. For the samples with confidence measures in $\Phi_H$, the optimal action is to accept the inference; therefore, regret is incurred whenever these samples are offloaded. We prove the following:
\begin{lemma}
For any $\alpha>0.5$, the expected number of offloads for samples corresponding to a confidence measure $\phi_i \in \Phi_H$ till time $T$ for both HI-LCB and HI-LCB-lite is bounded by
\[\max_{\sigma_T}\mathds{E}[\mathsf{O}_{\phi_i}(T)] \leq \frac{16\alpha \log T}{\Delta_{\phi_i}^2} +\frac{4\alpha}{2\alpha-1}. \]
\end{lemma}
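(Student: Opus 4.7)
My plan is a unified analysis that applies to both HI-LCB and HI-LCB-lite, exploiting the common lower bound
\[
\text{LCB}_{\phi_i}(t) \;\geq\; \hat f(\phi_i) - \sqrt{\alpha \log t / \mathsf{O}_{\phi_i}(t-1)}.
\]
This is an equality for HI-LCB-lite by \eqref{eq:vanillaLCB}, and holds for HI-LCB because the maximum in \eqref{eq:LCB_def} is taken over a set containing the index $j=i$. Since the offloading rule offloads iff $1 - \text{LCB}_{\phi(t)} \geq \text{LCB}_\gamma$, any argument that shows "$\text{LCB}_{\phi_i}(t) + \text{LCB}_\gamma(t) > 1$ forbids offloading" transfers from HI-LCB-lite to HI-LCB verbatim.

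Fix $\phi_i \in \Phi_H$ and set $u_i = \lceil 16\alpha \log T / \Delta_{\phi_i}^2 \rceil$. I would decompose
\[
\mathsf{O}_{\phi_i}(T) \;\leq\; u_i + \sum_{t=1}^T \mathbb{1}\{\phi(t)=\phi_i,\; D_\pi(t)=1,\; \mathsf{O}_{\phi_i}(t-1) \geq u_i\},
\]
and bound the expectation of the residual sum. Define the good event $\mathcal{G}_t = \{|\hat f(\phi_i) - f(\phi_i)| \leq \sqrt{\alpha \log t / \mathsf{O}_{\phi_i}(t-1)}\} \cap \{|\hat\gamma - \gamma| \leq \sqrt{\alpha \log t / \mathsf{O}_\gamma(t-1)}\}$. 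On $\mathcal{G}_t$ with $\mathsf{O}_{\phi_i}(t-1) \geq u_i$, the common LCB lower bound and the choice of $u_i$ give $\text{LCB}_{\phi_i} \geq f(\phi_i) - \Delta_{\phi_i}/2$; since every $\phi_i$-offload is also a $\gamma$-offload, $\mathsf{O}_\gamma(t-1) \geq u_i$ as well, so $\text{LCB}_\gamma \geq \gamma - \Delta_{\phi_i}/2$. Adding and using $f(\phi_i) + \gamma = 1 + \Delta_{\phi_i}$ (the defining property of $\Phi_H$), I obtain $\text{LCB}_{\phi_i} + \text{LCB}_\gamma \geq 1$, which contradicts the offloading rule (up to a boundary adjustment absorbed into the ceiling of $u_i$). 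Hence every "late" $\phi_i$-offload must coincide with $\mathcal{G}_t^c$.

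To bound the probability of $\mathcal{G}_t^c$-events, I would index successive $\phi_i$-offloads by $s = 1, 2, \ldots$ and let $\tau_s$ denote the random slot of the $s$-th offload. By the structural assumption \eqref{eq:struc}, the outcomes $1 - \eta(h_l, x_{\tau_k})$ for $k = 1, \ldots, s-1$ are i.i.d.\ Bernoulli$(f(\phi_i))$, and the deterministic inequality $\tau_s \geq s$ holds because each offload consumes at least one slot. Applying Hoeffding's inequality at the stopping time $\tau_s$ and using $\log \tau_s \geq \log s$ gives
\[
\Pr\!\left(|\hat f_{s-1} - f(\phi_i)| > \sqrt{\alpha \log \tau_s / (s-1)}\right) \;\leq\; 2 s^{-2\alpha}.
\]
Summing over $s > u_i$ yields $\sum_{s \geq 1} 2 s^{-2\alpha} \leq 2 + 2\int_1^\infty s^{-2\alpha}\, ds = \frac{4\alpha}{2\alpha-1}$, exactly matching the claimed additive constant; this is the step that pins down the requirement $\alpha > 1/2$.

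The main obstacle is handling the $\hat\gamma$ portion of $\mathcal{G}_t^c$ cleanly, since $\hat\gamma$ aggregates cost samples across all confidence values and its sample count is itself a random stopping time depending on the entire policy trajectory. I expect this to be controlled by a uniform-in-sample-size self-normalized martingale concentration of the form $\Pr(\exists m \geq 1: |\hat\gamma_m - \gamma| > \sqrt{\alpha \log m / m}) \leq \frac{4\alpha}{2\alpha-1}$, and the resulting over-offloading is ultimately absorbed into the global constant $\mathbf{C_1}(\Delta)$ of Theorem~\ref{thm:adv} rather than into the per-arm additive constant of this lemma. Combining the exploratory $u_i$ with the $f$-concentration residual yields $\mathds{E}[\mathsf{O}_{\phi_i}(T)] \leq 16\alpha \log T/\Delta_{\phi_i}^2 + 4\alpha/(2\alpha-1)$, uniformly over arrival sequences $\sigma_T$, as claimed.
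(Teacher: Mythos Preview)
Your threshold decomposition, the ``good event'' construction, and the algebra showing that $\text{LCB}_{\phi_i}+\text{LCB}_\gamma\geq 1$ forbids a late offload are exactly the paper's argument (its Lemma~\ref{lem:Ait}). The genuine gap is the one you yourself flag: the $\hat\gamma$ half of $\mathcal G_t^c$. Indexing late offloads by the $\phi_i$-offload counter $s$ is convenient for $\hat f$ because the sample count there is exactly $s-1$, but it is awkward for $\hat\gamma$ since $\mathsf O_\gamma$ is driven by offloads across \emph{all} confidence values. Your proposed remedy, pushing the $\hat\gamma$ contribution into $\mathbf{C_1}(\Delta)$, does not prove the lemma as stated: $\mathbf{C_1}(\Delta)$ is \emph{defined} from the per-arm constant $4\alpha/(2\alpha-1)$ of this lemma, so there is no slack there to absorb additional terms.

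The paper avoids the issue by summing over the time index $t$ rather than the offload index $s$. The key observation is that Hoeffding with $\epsilon=\sqrt{\alpha\log t/\mathsf O_\gamma(t)}$ yields an exponent $-2\,\mathsf O_\gamma(t)\cdot \alpha\log t/\mathsf O_\gamma(t)=-2\alpha\log t$ that is \emph{independent} of the random count $\mathsf O_\gamma(t)$; hence $\Pr(\hat\gamma\text{-failure at }t)\le t^{-2\alpha}$ unconditionally. Union-bounding with the $\hat f$-failure (same trick) gives $\Pr(G_{i,t}\mid H_{i,t})\le 2t^{-2\alpha}$, and $\sum_{t\ge1}2t^{-2\alpha}\le 2\zeta(2\alpha)\le 4\alpha/(2\alpha-1)$ delivers the stated additive constant in one stroke, with both failure modes already inside. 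If you prefer to keep your offload-index viewpoint, note that the values $m_s=\mathsf O_\gamma(\tau_s-1)$ are strictly increasing in $s$ (each $\phi_i$-offload increments $\mathsf O_\gamma$), so $\sum_{s>u_i}\mathbb 1\{\hat\gamma\text{-failure at }\tau_s\}\le\sum_{m\ge 1}\mathbb 1\{|\hat\gamma_m-\gamma|>\sqrt{\alpha\log m/m}\}$, whose expectation is again $\le 2\zeta(2\alpha)$; combined with a one-sided (not two-sided) bound for $\hat f$ this recovers the same constant.
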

On the other hand, regret is incurred due to the samples with confidence measures in $\Phi_L$ only if the policy decides to accept the local inference. We show the following:
\begin{lemma}
    For any $\alpha>0.5$, the expected number of accepts by HI-LCB and HI-LCB-lite corresponding to all the confidence measures in $\Phi_L$ till time $T$ is bounded by
    \[ \max_{\sigma_T} \mathds{E}\left[ \sum_{\phi_i \in\Phi_L}A_{\phi_i}(T)\right]\leq (|\Phi_L|+1)\times \frac{2\alpha}{{2\alpha-1}}.\]
\end{lemma}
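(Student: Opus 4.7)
The plan is to charge each mistaken accept of a sample with $\phi(t) = \phi_i \in \Phi_L$ to a ``concentration failure'' of one of the two learned lower confidence bounds, and then sum the (small) probabilities of such failures.

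First, I would establish the structural decomposition. Since $\phi_i \in \Phi_L$ gives $f(\phi_i) + \gamma \le 1$, while the policy accepts only when $\text{LCB}_{\phi_i}(t) + \text{LCB}_\gamma(t) > 1$, every accept of such a $\phi_i$ satisfies
\[
\{D_\pi(t)=0,\; \phi(t)=\phi_i\}\;\subseteq\; \{\text{LCB}_{\phi_i}(t) > f(\phi_i)\}\;\cup\;\{\text{LCB}_\gamma(t) > \gamma\}.
\]
The first event is $\phi_i$-specific, and summing over $\phi_i \in \Phi_L$ will yield the ``$|\Phi_L|$'' factor in the final bound. The second is shared across all $\phi_i \in \Phi_L$ and contributes at most once per time step, giving the ``$+1$''.

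Next, I would bound each failure probability using Hoeffding's inequality. At a time $t$ with $n = \mathsf{O}_{\phi_i}(t)$, $\hat f(\phi_i)$ is the mean of $n$ i.i.d.\ Bernoulli$(f(\phi_i))$ outcomes from previously offloaded samples, so for every \emph{fixed} $n$,
\[
\Pr\bigl[\hat f(\phi_i) - f(\phi_i) > \sqrt{\alpha \log t / n}\bigr]\;\le\; e^{-2\alpha \log t} \;=\; t^{-2\alpha},
\]
and the same bound applies to $\hat\gamma$. For HI-LCB, because $f$ is non-decreasing, any violation $\text{LCB}_{\phi_i}(t) > f(\phi_i)$ still implies a per-arm Hoeffding violation at some $\phi_j \le \phi_i$, and a union bound over the at most $|\Phi|$ such $\phi_j$ is absorbed into the $\phi_i$-contribution. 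Summing the per-step bounds over $t = 1,\dots,T$ gives
\[
\sum_{t=1}^{T} t^{-2\alpha}\;\le\;1 + \int_1^\infty x^{-2\alpha}\,dx \;=\; \frac{2\alpha}{2\alpha-1},
\]
which converges precisely when $\alpha > 1/2$; combining with the decomposition then produces the claimed $(|\Phi_L|+1)\cdot \frac{2\alpha}{2\alpha-1}$.

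The hard part is handling the feedback between the policy's decisions and the random observation count $n = \mathsf{O}_{\phi_i}(t)$: because $\hat f(\phi_i)$ and $n$ are not independent, a naive union bound over $n \in \{1,\ldots,t\}$ at each $t$ would cost a factor of $t$ and force $\alpha > 1$. The key observation that salvages the $\alpha > 1/2$ regime is that the Hoeffding bound $t^{-2\alpha}$ is identical for every $n$; within each ``state'' of fixed $n$, the estimate $\hat f(\phi_i)$ is frozen while the confidence radius $\sqrt{\alpha \log t / n}$ only tightens with $t$, so a single per-state witness event suffices to cover all accepts in that state and the $n$-index drops out of the final sum.
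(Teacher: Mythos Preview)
Your proposal matches the paper's proof: the paper first proves an intermediate lemma that $\Pr(I_t)\le (|\Phi_L|+1)\,t^{-2\alpha}$ (where $I_t$ is the event that the accept condition holds for some $\phi_j\in\Phi_L$ at time $t$) via exactly your decomposition into the events $\{\text{LCB}_{\phi_j}(t)>f(\phi_j)\}$ for $\phi_j\in\Phi_L$ and $\{\text{LCB}_\gamma(t)>\gamma\}$, bounds each by Hoeffding, and then sums $\sum_{t\ge 1} t^{-2\alpha}\le \tfrac{2\alpha}{2\alpha-1}$. Your closing paragraph on the random sample count $\mathsf{O}_{\phi_i}(t)$ goes beyond what the paper writes---the paper simply applies Hoeffding with the random count and reads off $t^{-2\alpha}$ without comment---so you are being more careful there, though your ``per-state witness'' sketch is heuristic rather than a complete resolution of that subtlety.
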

The next step is regret decomposition. 
Let $R^{\phi_i}_{\pi}(T)$ denote the regret for a policy $\pi$ associated with a given confidence measure $\phi_i \in \Phi$, i.e.,
\[
R^{\phi_i}_{\pi}(T)=\mathds{E}\left[ \sum_{t=1}^T\,(L_t^\pi- L_t^{\pi^\ast})\cdot\mathbb{1}\{\phi(t)=\phi_i\}\right].
\]
We show that 
\[  
R^{\phi_i}_{\pi}(T)=\begin{cases}(\gamma - 1 + f(\phi_i))\times \mathds{E}\left[\mathsf{O}_{\phi_i}(T)\right]& \text{if $\phi_i \in \Phi_H$},\\
(1-f(\phi_i) - \gamma)\times  \mathds{E}\left[A_{\phi_i}(T)\right]  & \text{otherwise}.
\end{cases}
\]    
We use this and the fact that
\[R^A_\pi(T)= \max_{\sigma_T} \sum_{\phi_i \in \Phi}R^{\phi_i}_{\pi}(T)
\]
\noindent to bound the regret for $\pi=$ HI-LCB, HI-LCB-lite, thus completing the proof.

The proofs of Theorem \ref{thm:adv}(c) and Theorem \ref{thm:adv}(d) use the same arguments as the proof of Theorem \ref{thm:adv}(a) and Theorem \ref{thm:adv}(b) under the additional simplification that there is no uncertainty in the value of the offload cost, leading to tighter regret bounds. 

\subsection{Proof of Theorem \ref{thm:stoch}}
We first present the proof outline for Theorem \ref{thm:stoch}(a). The stochastic arrival process allows HI-LCB to exploit the non-decreasing property of $f(\cdot)$, enabling better learning for rarely observed values of the confidence measures. Leveraging this, we prove that \[
   \mathds{E}[\mathsf{O}_{\phi_i}(T)]\leq w_i\cdot \min_{\phi_j \leq \phi_i} \mathds{E}\left[\frac{\mathsf{O}_{\phi_j}(T)}{w_j}\right],
    \]
where $w_i$ is the arrival probability of $\phi_i$. Consequently,  for HI-LCB with i.i.d. offloading costs, we show that
\[
   \mathds{E}[\mathsf{O}_{\phi_i}(T)] = O\left( \min_{\phi_j \in\Phi_H^{(i)}}\left(\frac{w_i}{w_j\Delta^2_{\phi_j}}\right)\log T\right)
    \text{for any $\phi_i$ in $\Phi_H$}. \]
The rest of the proof uses arguments similar to the proof of Theorem \ref{thm:adv}(a).


The proof for Theorem \ref{thm:stoch}(c) uses the same arguments as the proof of Theorem \ref{thm:stoch}(a) under the additional simplification that there is no uncertainty in the value of the offload cost, leading to tighter regret bounds. 

Since HI-LCB-lite does not exploit the non-decreasing property of $f(\cdot)$, we use the adversarial regret of the policy as an upper bound for the regret under stochastic arrivals, as stated in Theorems \ref{thm:stoch}(b) and \ref{thm:stoch}(d).

\subsection{Proof of Theorem \ref{thm:lb}}
\label{subsec:lb}
To prove the lower bound, we consider a simplified problem instance with a singleton confidence set $\Phi = \{ \phi_1 \}$, and the offloading cost $\Gamma_t$ drawn i.i.d. from a $\text{Bernoulli}(\gamma)$ distribution with $\gamma > 1 - f(\phi_1)$.
Let $\mathsf{O}_{\phi_1}(T)$ denote the number of offloads up to time $T$. Then, for any policy $\pi \in \mathcal{A}$ that guarantees a sublinear regret with high probability ($\geq1-1/T$),  we show that $\mathbb{E}[\mathsf{O}_{\phi_1}(T)]$ for this instance must satisfy:
 \begin{equation}
 \label{eq:lb-of}
  \lim_{T\rightarrow\infty} \mathbb{E}[\mathsf{O}_{\phi_1}(T)] \geq \frac{\log T} {D_B(\gamma  \,\|\, 1-f(\phi_1))},
 \end{equation}
where $D_B(p \,\|\, q)$ denotes the \textit{Kullback–Leibler divergence} \cite{lattimore2020bandit} between two Bernoulli distributions with parameters $p$ and $q$. The proof uses the change-of-measure argument used in the classical lower bound for multi-arm bandits \cite{LAI19854,lattimore2020bandit}. Using \eqref{eq:lb-of}, we have that $\min R_\pi^A(T)  = \Omega(\log T)$ for $\Phi=\{\phi_1\}$ with $\gamma>1-f(\phi_1)$, thus proving Theorem \ref{thm:lb}.

\vspace*{-0.1 in}
\section{Numerical Results}
\label{sec:simulations}
\vspace*{-0.1 in}
\color{black}
\begin{figure}[t]
    \centering
    \includegraphics[scale=0.34]{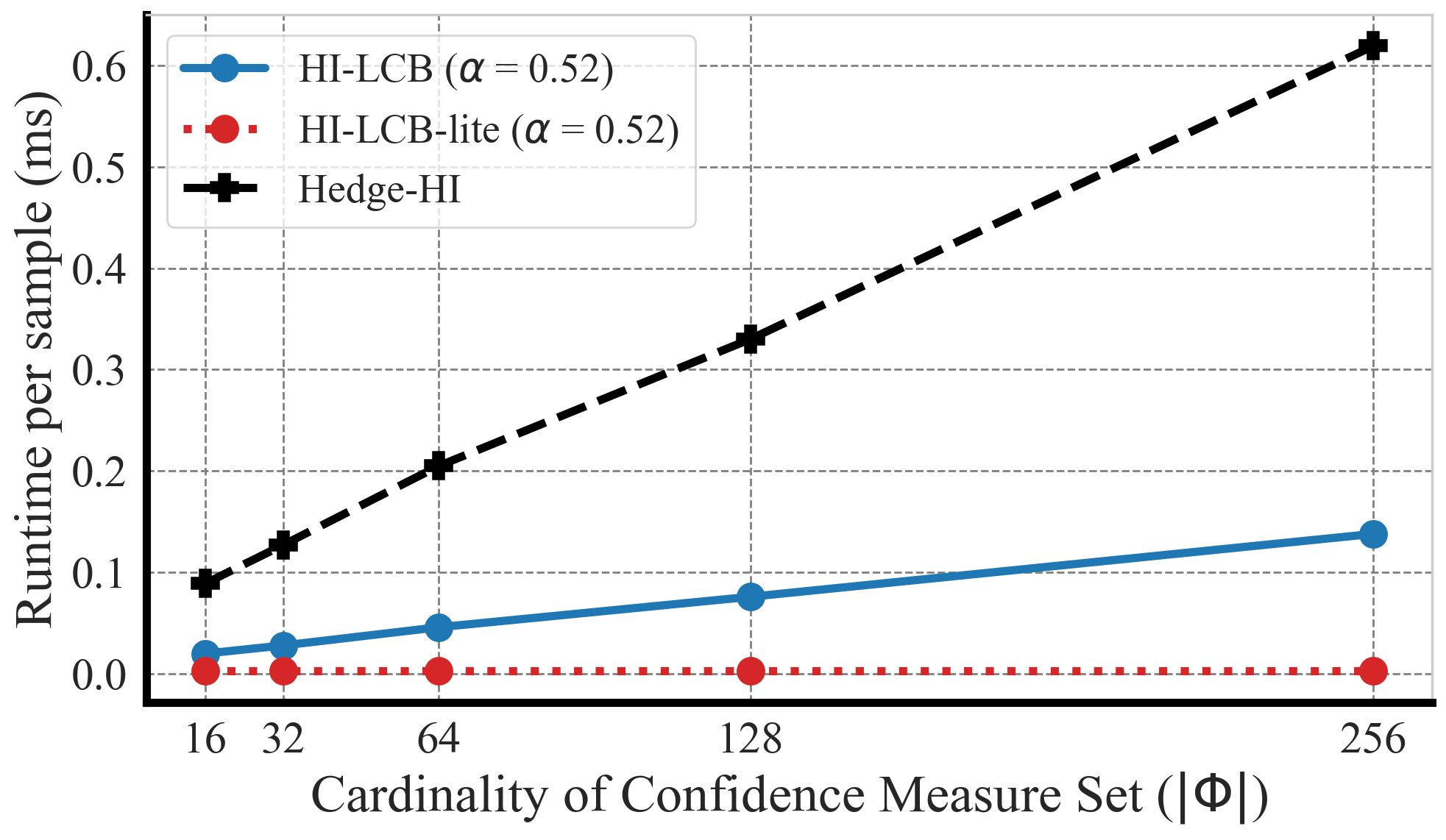}
    \caption{Runtime vs. cardinality of confidence measure set ($|\Phi|$).}
    \label{fig:runtimes}
    \vspace*{-0.2 in}
\end{figure}

\begin{figure*}[!t]
\centering
\begin{subfigure}[h]{1\textwidth}
    \includegraphics[width=\textwidth]{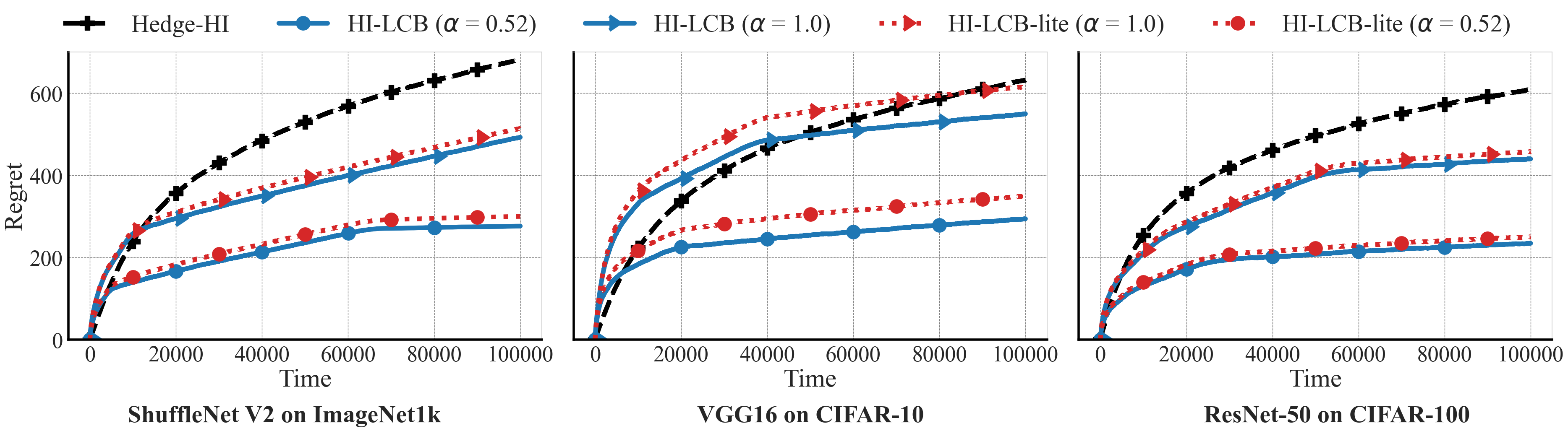}
    \caption{\normalsize Regret vs. time for a known constant offload cost $\gamma=0.5$.}
    \label{fig:accuracy1}
\end{subfigure}
\medskip 
\begin{subfigure}[h]{1\textwidth}
    \includegraphics[width=\textwidth]{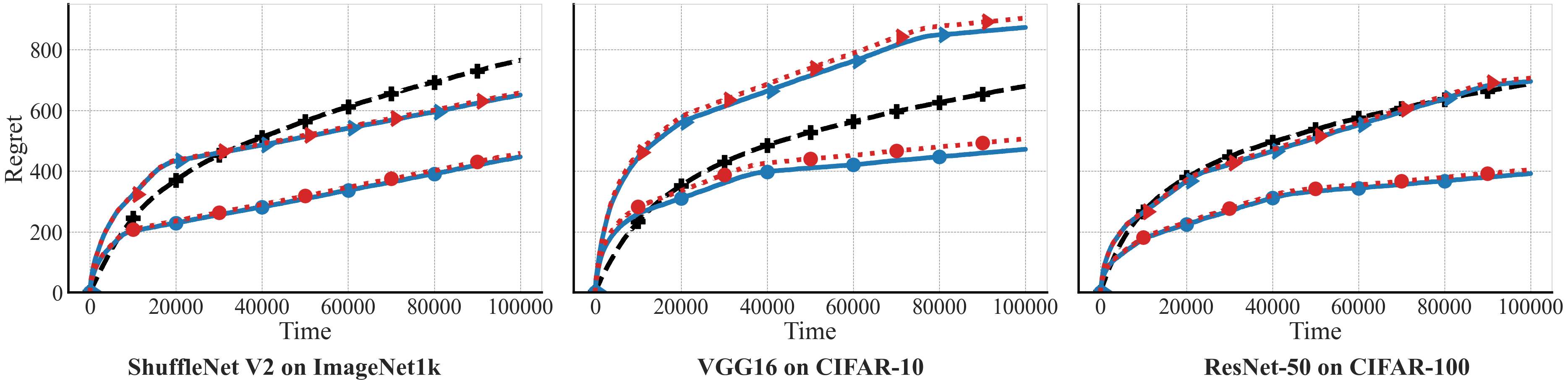}
    \caption{\normalsize Regret vs. time for i.i.d offload costs with $\mathds{E}[\Gamma_t]=0.5$.}
    \label{fig:accuracy2}
\end{subfigure}
\medskip 
\begin{subfigure}[h]{1\textwidth}
\includegraphics[width=\textwidth]{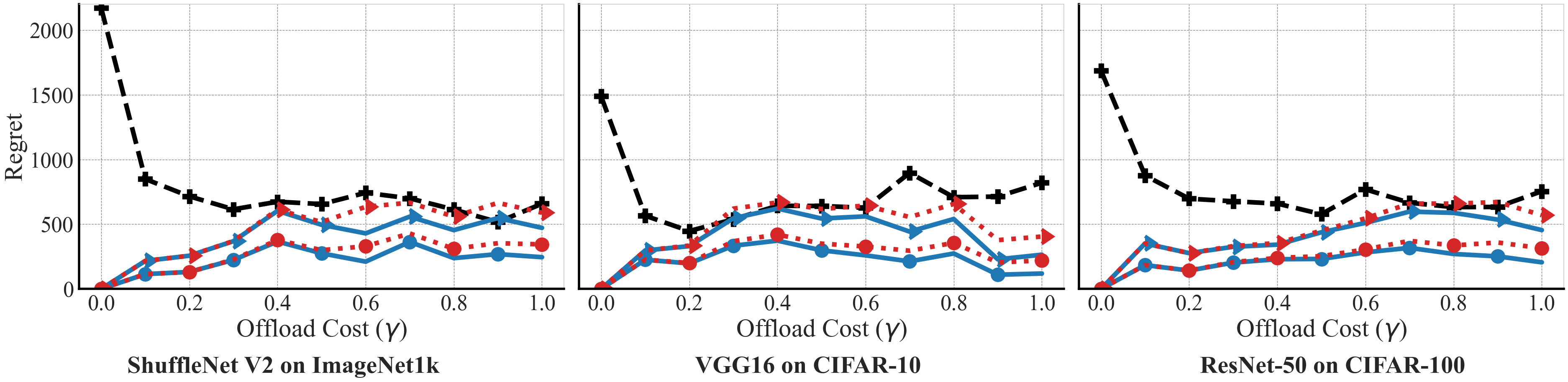}
    \caption{\normalsize Regret vs. offload cost ($\gamma$) for known constant offloading costs and $T=100000$.}
    \label{fig:accuracy3}
\end{subfigure}
\medskip 
\begin{subfigure}[h]{1\textwidth}
    \includegraphics[width=\textwidth]{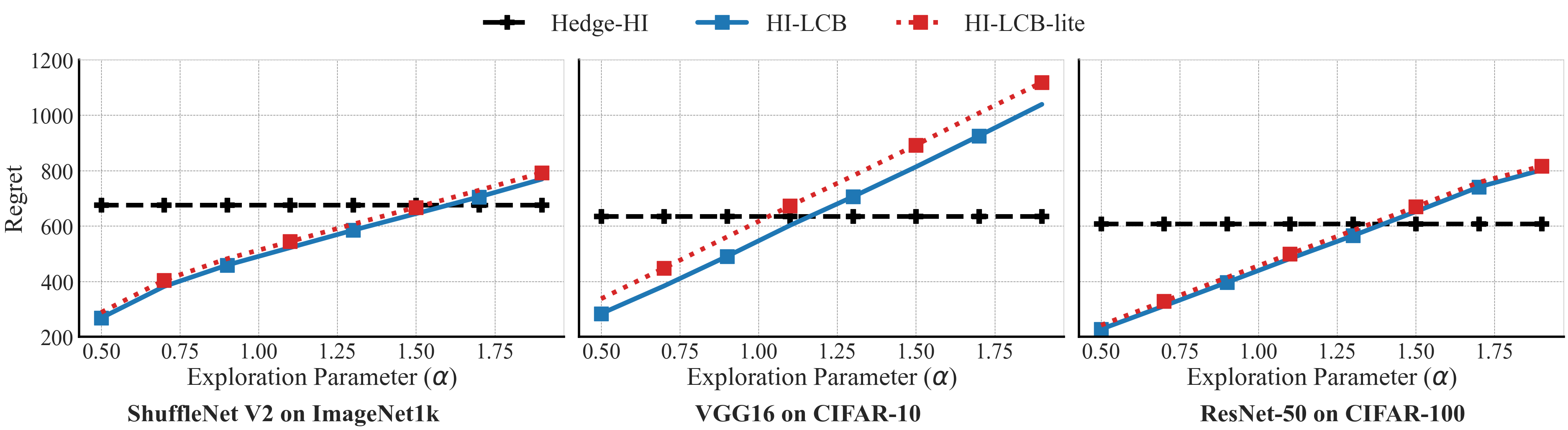}
    \caption{\normalsize Regret vs. exploration parameter $(\alpha)$ for known constant offloading cost $\gamma$ = 0.5 and $T=100000$.}
    \label{fig:accuracy4}
\end{subfigure}
\caption{Simulation results for ImageNet1k, CIFAR-10, and CIFAR-100. The legend for sub-figures (a), (b), and (c) is provided at the top of the page. The legend for sub-figure (d) is provided just above the sub-figure.}
\label{fig:accuracy}
\end{figure*}

In this section, we present our simulation results.

Previous works \cite{hilf,hedge-hi,behera2025exploring} have demonstrated that HI policies achieve superior trade-offs for latency and accuracy compared to two baseline policies: 1) the policy that accepts Local-ML inference for all samples, and 2) the policy that offloads all the samples for remote inference. Thus, in this work, we compare the performance of the HI policies proposed in this paper with that of existing HI policies.
\color{black}

Figure \ref{fig:runtimes} illustrates how the runtimes of the three policies vary with the cardinality of the set of all possible confidence measure values. Notably, the runtime of HI-LCB-lite remains constant, whereas the runtimes of the other two policies increase consistently with the cardinality of the confidence measure set.


We now present numerical results from experiments conducted on three standard image classification datasets: ImageNet1k \cite{deng2009imagenet}, CIFAR-10, and CIFAR-100 \cite{krizhevsky2009learning}. We employ several Local-ML models, including ShuffleNet V2, VGG16, and ResNet-50 \cite{he2016deep}. Our evaluation focuses on the performance of HI-LCB and HI-LCB-lite in comparison to the current state-of-the-art, Hedge-HI, with its hyperparameters set according to Corollary 2 of \cite{hedge-hi}. The performance metric used is regret relative to the optimal static threshold policy, $\pi^\ast$.

We present results for both known deterministic and stochastic offloading cost settings over a time horizon of $T = 100000$. All datasets are sampled uniformly at random to generate the arrival sequence. Each data point presented in the plot is averaged over 100 independent arrival sequences. We use the max-softmax function as the confidence measure, quantized to a 4-bit representation, resulting in $\lvert \Phi \rvert = 16$.

Recall that the regret guarantees in Section~\ref{sec:results} hold when the exploration parameter $\alpha$ is greater than $0.5$. We therefore restrict our numerical results to $\alpha > 0.5$. Figure \ref{fig:accuracy}(a) illustrates the regret performance of HI-LCB and HI-LCB-lite (with $\alpha=0.52$ and $\alpha=1$) alongside Hedge-HI for image classification with a known deterministic offloading cost of $\gamma=0.5$. \color{black}  Similarly, Figure \ref{fig:accuracy}(b) presents results for i.i.d. stochastic offloading costs, where $\Gamma_t$ is sampled from a bimodal distribution, taking values $0.45$ or $0.55$ with equal probability at each time-slot. 
At $T=100\,000$, HI-LCB and HI-LCB-lite with $\alpha = 0.52$ significantly outperform Hedge-HI across all three datasets. In both scenarios, Hedge-HI outperforms our policies for short time-horizons, and HI-LCB and HI-LCB-lite outperform Hedge-HI as the time increases. HI-LCB and HI-LCB-lite experience higher initial regret than Hedge-HI because, by definition, both policies offload samples across all confidence values until they can identify with high probability which values belong to $\phi_H$. After this initial phase, HI-LCB and HI-LCB-lite start outperforming Hedge-HI \color{black} and the gap between the regrets of Hedge-HI and our policies increases with time. As expected, HI-LCB marginally outperforms HI-LCB-lite for all parameter settings.

 In Figure \ref{fig:accuracy}(c), we compare the performance of various HI policies under constant and known offload costs spanning $[0,1]$. Notably, HI-LCB with $\alpha = 0.52$ consistently outperforms or matches the performance of Hedge-HI. HI-LCB-lite outperforms Hedge-HI for most parameter settings, with marginally worse performance in some cases. 

Figure \ref{fig:accuracy}(d) illustrates how the regrets for HI-LCB and HI-LCB-lite vary with the exploration parameter $\alpha$, with a fixed offloading cost of $\gamma = 0.5$. The regret of both policies consistently increases with $\alpha$. This suggests that smaller values of $\alpha$ (with $\alpha > 0.5$) result in lower regret.

Tables \ref{tab:offloads}  and \ref{tab:accuracy} show the fraction of samples offloaded and the fraction of samples classified accurately,  respectively, for the three policies with $\alpha = 0.52$, $\gamma = 0.5$, and $T = 100\,000$. We observe that across all three experimental setups, i.e., VGG16 on the 10-class CIFAR-10 dataset; ResNet-50 on the 100-class CIFAR-100 dataset; and ShuffleNet V2 on the 1000-class ImageNet1k dataset,
\color{black}HI-LCB-lite achieves the highest accuracy and fraction of offloaded samples, followed by HI-LCB and Hedge-HI.

\begin{table}[t]
    \centering
    \renewcommand{\arraystretch}{1.4}
    \begin{tabular}{cccc} 
        \hline 
        & ImageNet1k & CIFAR-10 & CIFAR-100\\ 
        \hline 
        Hedge-HI  & 0.361  & 0.325  & 0.493 \\
        HI-LCB  & 0.450  & 0.333 & 0.620\\
        HI-LCB-lite  & 0.451 & 0.337 & 0.624\\
        \hline 
    \end{tabular}
      \caption{Fraction of samples offloaded.}
      \label{tab:offloads}
\end{table}
\color{black}
\begin{table}[t]
    \centering
    \renewcommand{\arraystretch}{1.4}
    \begin{tabular}{cccc} 
        \hline 
        & ImageNet1k & CIFAR-10 & CIFAR-100\\ 
        \hline 
        Hedge-HI  & $86.38$  & $89.63$  & $85.42$\\
        HI-LCB  &  $91.14$ &  $90.29$& $92.28$\\
        HI-LCB-lite  & $91.17$ & $90.42$  & $92.43$\\
        \hline 
    \end{tabular}
      \caption{Classification accuracy (\%).}
      \label{tab:accuracy}
       \vskip -0.1 in
\end{table}
\color{black}

Overall, our policies offload more samples and achieve higher accuracy. Our performance metric rewards accuracy while penalizing excessive offloads, and thus, the total cost incurred by our policies is lower than Hedge-HI, resulting in lower regret. Furthermore, a key takeaway from our results is that HI-LCB-lite achieves comparable performance to HI-LCB, despite its significantly lower computational complexity.

\color{black}
\section{Conclusions}

This work focuses on the HIL problem. A central modeling contribution of this work is the structural relationship between a Local-ML model's confidence measure for a given sample and the probability of its inference being correct. This relationship is empirically supported by data and enabled the design of two novel online policies that use the Upper Confidence Bound (UCB) framework: HI-LCB and HI-LCB-lite.  While these policies leverage the Upper Confidence Bound (UCB) framework which is a staple of multi-armed bandit (MAB) literature, our setting diverges due to a specific feedback asymmetry: the offload arm reveals rewards for both arms, whereas the accept arm reveals nothing. This unique structure necessitates the novel policies and analysis. \color{black} Our proposed policies offer distinct advantages over existing approaches for the HIL problem, including tighter theoretical regret guarantees, significantly reduced computational complexity for HI-LCB-lite, and superior empirical performance. Additional noteworthy aspects are that HI-LCB and HI-LCB-lite are anytime policies, i.e., they do not need to know the time horizon ($T$) upfront. Existing policies, however, need to know $T$ to optimize their hyperparameters. Further, HI-LCB and HI-LCB-lite are deterministic policies, thereby eliminating the requirement to generate random quantities during execution.

\bibliographystyle{IEEEtran} 
\bibliography{IEEEabrv, ref}

\section*{Appendix A}
\subsection{Proof of Theorem \ref{thm:adv}(a) and Theorem \ref{thm:adv}(b)}
Corresponding to each confidence measure $\phi_i \in \Phi$, we define the following quantities:
\begin{itemize}[itemsep=0.5em]
    \item[--] $\mathsf{O}_{\phi_i}(t)$: Value of $\mathsf{O}_{\phi_i}$ at the start of time-slot $t$,
    \item[--] $\hat{f}(\phi_i, t)$: Value of $\hat{f}(\phi_i)$ at the start of time-slot $t$,
    \item[--] $\text{LCB}_{\phi_i}(t)$: Value of $\text{LCB}_{\phi_i}$ at the start of time-slot $t$,
    \item[--] $A_{\phi_i}(t)$: Number of accepted samples until time $t$ with confidence measure $\phi_i$, i.e.,
\[A_{\phi_i}(t)=\sum_{n=1}^t\mathbb{1}\left\{D_\pi(n)=0, \phi(n)=\phi_i\right\}.\]
\end{itemize}
Additionally, for the average offloading cost $\gamma$, we define:
\begin{itemize}[itemsep=0.5em]
    \item[--] $\hat{\gamma}(t)$: Value of estimator $\hat{\gamma}$ at the start of time-slot $t$,
    \item[--] $\text{LCB}_\gamma( t)$: Value of $\text{LCB}_\gamma$ at the start of time-slot $t$.
    \item[--] $\mathsf{O}_{\gamma}(t)$: Value of $\mathsf{O}_{\gamma}$ at the start of time-slot $t$ Here, \(\displaystyle \mathsf{O}_{\gamma}(t) =\sum_{\phi_i \in \Phi} \mathsf{O}_{\phi_i}(t)\).
\end{itemize}
Lastly, we define the following events:
\begin{equation}
\begin{aligned}
 \label{eq:Ait}   
G_{i,t}&:= 1-\text{LCB}_{\phi_i}(t)\geq \text{LCB}_\gamma(t),\\
H_{i,t}&:=  \mathsf{O}_{\phi_i}(t) > 16\alpha\log t/\Delta^2_{\phi_i},\\
I_t&:= {\bigcap_{\Phi_L}\overline{G_{j,t}}}: \max_{\Phi_L}\left\{1-\text{LCB}_{\phi_j}(t)\right\} < \text{LCB}_\gamma(t).
\end{aligned}
\end{equation}

 \begin{lemma}[Hoeffding's inequality \cite{lattimore2020bandit}]
 \label{lem:hoeff}
Let $M_1$, $M_2$, $\dots$, $M_n$ $\in  [0, b]$ be i.i.d. random variables with \( \mu = \mathbb{E}[M_i] \) for all \( i \in \{1,2,\dots n\}\). Let 
\(\displaystyle
\hat{\mu}(n) = \frac{1}{n} \sum_{i=1}^n M_i.
\) 
For any \( \epsilon > 0 \),
\[
\Pr(|\hat{\mu}(n) - \mu| \geq \epsilon) \leq 2 e^{-2n\epsilon^2/b^2},
\]
\end{lemma}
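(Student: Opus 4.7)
The plan is to prove Hoeffding's inequality via the classical Chernoff-bound argument combined with Hoeffding's lemma on bounded random variables. First I would decompose the two-sided event via a union bound, writing $\Pr(|\hat{\mu}(n) - \mu| \geq \epsilon) \leq \Pr(\hat{\mu}(n) - \mu \geq \epsilon) + \Pr(\mu - \hat{\mu}(n) \geq \epsilon)$, and bound each one-sided tail separately, since a factor of $2$ appears on the right-hand side of the stated bound.

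For the upper tail, I would apply Markov's inequality to the exponentiated centred sum. For any $s > 0$,
\[
\Pr(\hat{\mu}(n) - \mu \geq \epsilon) \leq e^{-s n \epsilon}\,\mathbb{E}\!\left[\exp\!\Big(s \textstyle\sum_{i=1}^n (M_i - \mu)\Big)\right] = e^{-s n \epsilon}\prod_{i=1}^{n}\mathbb{E}[e^{s(M_i - \mu)}],
\]
where the factorization uses the i.i.d. assumption. The main technical obstacle, and the core of the argument, is Hoeffding's lemma: for a zero-mean random variable $Y$ supported on $[a,b]$, $\mathbb{E}[e^{sY}] \leq \exp(s^2(b-a)^2/8)$. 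I would establish this by invoking convexity of $x \mapsto e^{sx}$ to write $e^{sy} \leq \tfrac{b-y}{b-a}e^{sa} + \tfrac{y-a}{b-a}e^{sb}$ for $y \in [a,b]$, taking expectation to obtain a deterministic upper bound in terms of $a, b, s$, and then showing that the logarithm of the resulting function of $s$ has second derivative bounded by $(b-a)^2/4$, so that a Taylor expansion around $s = 0$ (where the function and its first derivative vanish) yields the stated Gaussian exponent.

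Applying Hoeffding's lemma with $Y_i = M_i - \mu$, which lies in the interval $[-\mu, b-\mu]$ of width $b$, yields $\mathbb{E}[e^{s(M_i - \mu)}] \leq e^{s^2 b^2/8}$. Substituting back gives $\Pr(\hat{\mu}(n) - \mu \geq \epsilon) \leq \exp(-sn\epsilon + s^2 n b^2/8)$; optimizing the right-hand side over $s > 0$ by choosing $s = 4\epsilon/b^2$ produces the one-sided bound $\exp(-2n\epsilon^2/b^2)$. The lower tail is handled by the symmetric argument applied to $-M_i$, noting that $-Y_i$ is supported in an interval of the same width $b$, so Hoeffding's lemma delivers the same moment-generating-function bound. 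Combining the two one-sided bounds through the union bound introduces the factor of $2$ and completes the proof.
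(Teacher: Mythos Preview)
Your proof plan is correct and follows the standard textbook derivation of Hoeffding's inequality: Chernoff's exponential Markov bound, factorization via independence, Hoeffding's lemma for the moment generating function of a bounded centred variable, optimization over the free parameter $s$, and a union bound to combine the two tails. Each step is sound, and in particular your application of Hoeffding's lemma with interval width $b$ and the optimizing choice $s = 4\epsilon/b^2$ correctly yields the exponent $-2n\epsilon^2/b^2$.

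The only point of comparison is that the paper does not actually prove this lemma at all: it is stated as a cited result from \cite{lattimore2020bandit} and used as a black box in the subsequent analysis. So your proposal goes strictly beyond what the paper does here. There is nothing wrong with that, and your argument is the canonical one, but be aware that in the context of the paper this lemma is treated as background rather than something requiring its own proof.
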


We use Lemma \ref{lem:Ait} and Lemma \ref{lem:Offload-H} to characterize an upper bound on the adversarial regret of HI-LCB and HI-LCB-lite associated with the confidence measures in $\Phi_H$.
 \begin{lemma}
\label{lem:Ait}
 Under HI-LCB and HI-LCB-lite with i.i.d offloading costs, for any $\phi_i \in \Phi_H$,
\begin{equation}
    \label{eq:pr-ait}
\Pr(G_{i,t} | {H_{i,t}}) \leq 2t^{-2\alpha} \text{ for any } t \in\{1,\dots, T\}.
\end{equation}
\end{lemma}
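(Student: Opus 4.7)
The plan is to reduce the event $G_{i,t}$ to two one-sided concentration failures, one for the estimate of $f(\phi_i)$ and one for the estimate of $\gamma$, and bound each using Hoeffding's inequality (Lemma~\ref{lem:hoeff}) after exploiting $H_{i,t}$ to neutralize the exploration radii. First, I would rewrite the offloading condition in $G_{i,t}$. Since $\phi_i\in\Phi_H$ gives $\Delta_{\phi_i}=\gamma+f(\phi_i)-1>0$, the inequality $1-\text{LCB}_{\phi_i}(t)\geq\text{LCB}_\gamma(t)$ is algebraically equivalent to $\bigl(f(\phi_i)-\text{LCB}_{\phi_i}(t)\bigr)+\bigl(\gamma-\text{LCB}_\gamma(t)\bigr)\geq \Delta_{\phi_i}$. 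Therefore at least one of the two summands must be at least $\Delta_{\phi_i}/2$, and a union bound yields
\[ \Pr(G_{i,t}\mid H_{i,t}) \leq \Pr\bigl(f(\phi_i)-\text{LCB}_{\phi_i}(t)\geq \Delta_{\phi_i}/2 \mid H_{i,t}\bigr) + \Pr\bigl(\gamma-\text{LCB}_\gamma(t)\geq \Delta_{\phi_i}/2 \mid H_{i,t}\bigr). \]

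For the first probability, the key observation is that both policies satisfy $\text{LCB}_{\phi_i}(t)\geq \hat{f}(\phi_i,t)-\sqrt{\alpha\log t/\mathsf{O}_{\phi_i}(t)}$: for HI-LCB-lite this is the definition \eqref{eq:vanillaLCB}, while for HI-LCB it follows by keeping only the $\phi_j=\phi_i$ term inside the max in \eqref{eq:LCB_def}. Hence the first gap is at most $f(\phi_i)-\hat{f}(\phi_i,t)+\sqrt{\alpha\log t/\mathsf{O}_{\phi_i}(t)}$. On $H_{i,t}$ we have $\mathsf{O}_{\phi_i}(t)>16\alpha\log t/\Delta_{\phi_i}^2$, which forces the exploration radius strictly below $\Delta_{\phi_i}/4$ and reduces the event to $\hat{f}(\phi_i,t)\leq f(\phi_i)-\Delta_{\phi_i}/4$. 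Since the indicators averaged by $\hat{f}(\phi_i,\cdot)$ are i.i.d.\ Bernoulli$(f(\phi_i))$ by \eqref{eq:struc}, the one-sided Hoeffding bound applied at any admissible count $n>16\alpha\log t/\Delta_{\phi_i}^2$ gives probability at most $\exp(-n\Delta_{\phi_i}^2/8)\leq t^{-2\alpha}$. The second probability follows identically: since $\mathsf{O}_\gamma(t)=\sum_j\mathsf{O}_{\phi_j}(t)\geq\mathsf{O}_{\phi_i}(t)$, its exploration radius is also below $\Delta_{\phi_i}/4$ on $H_{i,t}$, and Hoeffding applied to the i.i.d.\ costs $\Gamma_t\in[0,1]$ yields another $t^{-2\alpha}$. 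Summing the two contributions delivers the stated $2t^{-2\alpha}$.

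The main obstacle I expect is not algebraic but measure-theoretic: the sample counts $\mathsf{O}_{\phi_i}(t)$ and $\mathsf{O}_\gamma(t)$ are random quantities that depend on the entire past trajectory of decisions, so Hoeffding cannot be invoked naively with a single fixed $n$. My intended remedy is a standard peeling step, namely taking a union bound over all feasible integer values of the count and conditioning on the history so that, within each level set, the contributing Bernoulli (or bounded) observations are genuinely i.i.d. Because the per-$n$ bound decays exponentially in $n$, this peeling costs only a constant factor and therefore preserves the $t^{-2\alpha}$ order asserted in the lemma.
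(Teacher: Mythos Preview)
Your proposal is correct and follows essentially the same approach as the paper: both arguments lower-bound $\text{LCB}_{\phi_i}(t)$ by $\hat f(\phi_i,t)-\sqrt{\alpha\log t/\mathsf{O}_{\phi_i}(t)}$ (valid for both policies), apply Hoeffding to the $f$- and $\gamma$-estimates separately, and use $H_{i,t}$ to force each exploration radius below $\Delta_{\phi_i}/2$ (the paper writes this as $\sqrt{4\alpha\log t/\mathsf{O}_{\phi_i}(t)}<\Delta_{\phi_i}/2$, which is exactly your $\Delta_{\phi_i}/4$ bound on $\sqrt{\alpha\log t/\mathsf{O}_{\phi_i}(t)}$). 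Your explicit ``at least one summand $\geq\Delta_{\phi_i}/2$'' decomposition is a mild rephrasing of the paper's direct comparison $1-f(\phi_i)+\sqrt{4\alpha\log t/\mathsf{O}_{\phi_i}}<\gamma-\sqrt{4\alpha\log t/\mathsf{O}_\gamma}$, and you are actually more careful than the paper in flagging the random-count issue, which the paper handles by silently treating $\mathsf{O}_{\phi_i}(t)$ as fixed when invoking Hoeffding.
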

\begin{proof}
From the $\text{LCB}_\phi$ definitions in \eqref{eq:LCB_def} and \eqref{eq:vanillaLCB}, and the $\text{LCB}_\gamma$ definition in \eqref{eq:LCB2} we know that for any $\phi_i \in \Phi$, under both HI-LCB and HI-LCB-lite  at any time $t$:
\begin{equation}
\label{eq:lcb-comp}
    \begin{aligned}
    \text{LCB}_{\phi_i}(t) &\geq\hat f(\phi_i,t) - \sqrt{\frac{\alpha \log t}{\mathsf{O}_{\phi_i}(t)}},\\
    \text{LCB}_{\gamma}(t)&=\hat{\gamma}(t)-\sqrt{\frac{\alpha \log t}{\mathsf{O}_{\gamma}(t)}}\geq\hat{\gamma}(t)-\sqrt{\frac{\alpha \log t}{\mathsf{O}_{\phi_i}(t)}}.
    \end{aligned}
\end{equation}

Applying Hoeffding's inequality (Lemma \ref{lem:hoeff}), we obtain
\begin{equation}
    \label{eq:interim-hoeff}
   \Pr \left(f(\phi_i)-\hat{f}(\phi_i,t)>\epsilon\right)\leq e^{-2\epsilon^2\mathsf{O}_{\phi_i}(t)} .
\end{equation}
Setting \(\displaystyle \epsilon= \sqrt{\frac{\alpha \log t}{\mathsf{O}_{\phi_i}(t)}}\) in \eqref{eq:interim-hoeff}, we get
\begin{equation}
\begin{aligned}
\label{eq:hoeff1}
&\Pr \left(\hat{f}(\phi_i,t)<f(\phi_i)-\sqrt{\frac{\alpha \log t}{\mathsf{O}_{\phi_i}(t)}}\right)\leq t^{-2\alpha}, \text{ and thus,}\\
&\Pr\left(\hat{f}(\phi_i,t)-\sqrt{\frac{\alpha \log t}{\mathsf{O}_{\phi_i}(t)}}<f(\phi_i) - \sqrt{\frac{4\alpha \log t}{\mathsf{O}_{\phi_i}(t)}}\right)\leq t^{-2\alpha},\\
&\Pr\left(\text{LCB}_{\phi_i}(t)<f(\phi_i) - \sqrt{\frac{4\alpha \log t}{\mathsf{O}_{\phi_i}(t)}}\right)\leq t^{-2\alpha}, \text{and}\\
&\Pr\left(1-\text{LCB}_{\phi_i}(t)>1-f(\phi_i) + \sqrt{\frac{4\alpha \log t}{\mathsf{O}_{\phi_i}(t)}}\right)\leq t^{-2\alpha}.
\end{aligned}
\end{equation}
Similarly, for the offloading costs:
\begin{equation}
\begin{aligned}
\label{eq:hoeff2}
    &\Pr\left(\hat{\gamma}(t)-\sqrt{\frac{\alpha \log t}{\mathsf{O}_{\gamma}(t)}}<\gamma - \sqrt{\frac{4\alpha \log t}{\mathsf{O}_{\gamma}(t)}}\right)\leq t^{-2\alpha},\\
    &\Pr\left(\text{LCB}_{\gamma}(t)<\gamma - \sqrt{\frac{4\alpha \log t}{\mathsf{O}_{\gamma}(t)}}\right)\leq t^{-2\alpha}.
\end{aligned}\end{equation}

 For any $\phi_i \in \Phi_H$, given $\mathsf{O}_{\phi_i}(t) > 16\alpha \log t/\Delta_{\phi_i}^2$,
{\small
\begin{equation}
\label{eq:uneq}
1-f(\phi_i) + \sqrt{\frac{4\alpha \log t}{\mathsf{O}_{\phi_i}(t)}} < \gamma-\sqrt{\frac{4\alpha \log t}{\mathsf{O}_{\phi_i}(t)}} \leq \gamma-\sqrt{\frac{4\alpha \log t}{\mathsf{O}_{\gamma}(t)}}.
\end{equation}}
Using \eqref{eq:hoeff1}, \eqref{eq:hoeff2}, and \eqref{eq:uneq}, we obtain
\[
\Pr\left(1-\text{LCB}_{\phi_i}(t)>\text{LCB}_{\gamma}(t)\,|\,{H_{i,t}}\right) \leq t^{-2\alpha}+t^{-2\alpha}.\\
\]
This completes the proof of Lemma \ref{lem:Ait}.
\end{proof}

\begin{lemma}
    \label{lem:Offload-H}
For any $\alpha>0.5$, the expected number of offloads for samples corresponding to confidence measure $\phi_i \in \Phi_H$ for both HI-LCB and HI-LCB-lite is bounded by
\[\max_{\sigma_T}\mathds{E}[\mathsf{O}_{\phi_i}(t)] \leq \frac{16\alpha \log t}{\Delta_{\phi_i}^2} +\frac{4\alpha}{2\alpha-1}. \]
\end{lemma}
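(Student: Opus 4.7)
The plan is to use a standard UCB-style decomposition of the offload count based on whether the observation count $\mathsf{O}_{\phi_i}(t)$ has already exceeded the critical threshold $16\alpha \log t/\Delta^2_{\phi_i}$. The first piece will be controlled by a deterministic counting argument, and the second piece by Lemma \ref{lem:Ait}, which already handles both HI-LCB and HI-LCB-lite uniformly.

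First I would write
\[
\mathsf{O}_{\phi_i}(T) \;=\; \sum_{t=1}^{T} \mathbb{1}\{G_{i,t}\} \;=\; \underbrace{\sum_{t=1}^{T} \mathbb{1}\{G_{i,t} \cap \overline{H_{i,t}}\}}_{(\mathrm{I})} \;+\; \underbrace{\sum_{t=1}^{T} \mathbb{1}\{G_{i,t} \cap H_{i,t}\}}_{(\mathrm{II})},
\]
where $G_{i,t}$ is the event that the policy offloads a $\phi_i$-sample at time $t$ and $H_{i,t}$ is the event $\mathsf{O}_{\phi_i}(t) > 16\alpha \log t/\Delta^2_{\phi_i}$. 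For (I), I would observe that each occurrence of $G_{i,t}$ increments $\mathsf{O}_{\phi_i}$ by one, so if we enumerate the offload times $t_1 < t_2 < \dots$ then $\mathsf{O}_{\phi_i}(t_n) = n - 1$. Imposing $\overline{H_{i,t_n}}$ forces $n - 1 \le 16\alpha \log t_n /\Delta^2_{\phi_i} \le 16\alpha \log T/\Delta^2_{\phi_i}$ since $t_n \le T$, which gives a deterministic, sequence-independent bound $(\mathrm{I}) \le 16\alpha \log T/\Delta^2_{\phi_i}$ (up to an integer rounding that can be absorbed into the additive constant).

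For (II), I would take expectations and apply Lemma \ref{lem:Ait}, which yields $\Pr(G_{i,t} \cap H_{i,t}) \le \Pr(G_{i,t} \mid H_{i,t}) \le 2 t^{-2\alpha}$ independently of the adversarial sequence $\sigma_T$. Summing and using $\alpha > 1/2$ so that the $p$-series converges,
\[
\mathds{E}[(\mathrm{II})] \;\le\; \sum_{t=1}^{T} 2 t^{-2\alpha} \;\le\; 2\Bigl(1 + \int_{1}^{\infty} t^{-2\alpha}\,dt\Bigr) \;=\; 2 \cdot \frac{2\alpha}{2\alpha - 1} \;=\; \frac{4\alpha}{2\alpha - 1}.
\]
Combining the two bounds gives the claimed inequality, and since neither estimate depends on $\sigma_T$, the maximum over arrival sequences preserves it.

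The main obstacle is the deterministic bound on (I): one must be careful that $H_{i,t}$ is defined using $\log t$ rather than $\log T$, so the counting argument goes through only after replacing $\log t_n$ by the looser $\log T$. Once this monotonicity observation is in place, the rest is routine, and the argument is identical for HI-LCB and HI-LCB-lite because Lemma \ref{lem:Ait} already exploits only the common inequality $\mathrm{LCB}_{\phi_i}(t) \ge \hat f(\phi_i, t) - \sqrt{\alpha \log t / \mathsf{O}_{\phi_i}(t)}$, which both LCB definitions in \eqref{eq:LCB_def} and \eqref{eq:vanillaLCB} satisfy.
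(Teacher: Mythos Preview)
Your proposal is correct and follows essentially the same strategy as the paper: split the offload indicators according to whether $H_{i,t}$ holds, bound the $H_{i,t}$ piece by summing the $2t^{-2\alpha}$ estimate from Lemma~\ref{lem:Ait} into $2\zeta(2\alpha)\le \tfrac{4\alpha}{2\alpha-1}$, and bound the $\overline{H_{i,t}}$ piece deterministically by the threshold $16\alpha\log T/\Delta_{\phi_i}^2$. Your indicator decomposition $\mathbb{1}\{G_{i,t}\}=\mathbb{1}\{G_{i,t}\cap \overline{H_{i,t}}\}+\mathbb{1}\{G_{i,t}\cap H_{i,t}\}$ is in fact a slightly cleaner packaging than the paper's conditional-expectation version, and your explicit counting argument for term~(I) makes transparent the step the paper states without justification.
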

\begin{proof} 
{Using the \textit{Law of Total Expectation} \cite{lattimore2020bandit}, we obtain
\[
\begin{aligned}
\mathds{E}\left[ \mathbb{1}\{G_{i,n}\}\right]= &\mathds{E}\left[ \mathbb{1}\{G_{i,n}\}\mid H_{i,n}\right]\cdot\Pr(H_{i,n})\\&+ \mathds{E}\left[ \mathbb{1}\{G_{i,n}\}\mid \overline{H_{i,n}}\right]\cdot\Pr(\overline{H_{i,n}})\\
\leq& \mathds{E}\left[ \mathbb{1}\{G_{i,n}\}\mid H_{i,n}\right]+ \mathds{E}\left[ \mathbb{1}\{G_{i,n}\}\mid \overline{H_{i,n}}\right].
\end{aligned}\]}

 We show that for any confidence measure $\phi_i \in \Phi_H$, under any arrival sequence $\sigma_T$, $ \mathds{E}[\mathsf{O}_{\phi_i}(t)]$ can be bounded in the following manner using Lemma \ref{lem:Ait}.
\[
\begin{aligned}
    \mathds{E}[\mathsf{O}_{\phi_i}(t)]&= \mathds{E}\left[\sum_{n=1}^t \mathbb{1}\{G_{i,n},\,\phi(n) =\phi_i \}\right]\\&=\sum_{n=1}^t\mathds{E}\left[\mathbb{1}\{G_{i,n},\,\phi(n) =\phi_i\}\right]
    \\&\leq\sum_{n=1}^t\mathds{E}\left[\mathbb{1}\{G_{i,n}\}\right]\\
    &\leq \sum_{n=1}^t\mathds{E}\left[\mathbb{1}\{G_{i,n}\}|\overline{H_{i,n}}\right] + \sum_{n=1}^t\mathds{E}\left[ \mathbb{1}\{G_{i,n}\}|{H_{i,n}}\right]\\
    &\leq \sum_{n=1}^t\mathds{E}\left[ \mathbb{1}\{G_{i,n}\}|\overline{H_{i,n}}\right] + \sum_{n=1}^t\Pr\left( G_{i,n}|{H_{i,n}}\right)\\
    &\leq \sum_{n=1}^t\mathds{E}\left[\mathbb{1}\{G_{i,n}\}|\overline{H_{i,n}}\right] + 2\sum_{n=1}^t n^{-2\alpha}\\
    &\leq \frac{16\alpha \log t}{\Delta_{\phi_i}^2} + 2\zeta(2 \alpha), 
\end{aligned}
\]
where $\zeta(\cdot)$ is the \textit{Riemann zeta} function. For any $k>1$, $\zeta(k)$ is upper-bounded by $\frac{k}{k-1}$, thus proving Lemma \ref{lem:Offload-H}.
\end{proof}

We use Lemma \ref{lem:Bit} and Lemma \ref{lem:Accept-L} to characterize an upper bound for the adversarial regret of HI-LCB and HI-LCB-lite associated with the confidence measures in $\Phi_L$.
\begin{lemma}
\label{lem:Bit} For HI-LCB and HI-LCB-lite with i.i.d. offloading costs,
at any time $t\leq T$, \(
\Pr(I_t) \leq  (|\Phi_L|+1)t^{-2\alpha}.\)
\end{lemma}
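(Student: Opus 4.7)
The plan is to show that the event $I_t$ cannot occur as long as each lower confidence bound is a valid under-estimate of its target, and then apply a Hoeffding tail bound plus a union bound to control the LCB-failure probability. First, I would introduce the good event
\[
E = \{\text{LCB}_{\phi_j}(t) \leq f(\phi_j) \;\; \forall\, \phi_j \in \Phi_L\} \cap \{\text{LCB}_\gamma(t) \leq \gamma\}
\]
and argue that $I_t \subseteq E^c$. Indeed, on $E$, for any $\phi_j \in \Phi_L$ the defining property $1-f(\phi_j) \geq \gamma$ gives
\[
1 - \text{LCB}_{\phi_j}(t) \;\geq\; 1 - f(\phi_j) \;\geq\; \gamma \;\geq\; \text{LCB}_\gamma(t),
\]
which directly contradicts $\max_{\phi_j \in \Phi_L}(1-\text{LCB}_{\phi_j}(t)) < \text{LCB}_\gamma(t)$.

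Next, I would bound $\Pr(E^c)$ by a union bound after establishing the upper-tail Hoeffding estimates
\[
\Pr\!\left(\text{LCB}_{\phi_j}(t) > f(\phi_j)\right) \leq t^{-2\alpha}, \qquad \Pr\!\left(\text{LCB}_\gamma(t) > \gamma\right) \leq t^{-2\alpha},
\]
obtained by the same argument that produced \eqref{eq:hoeff1} and \eqref{eq:hoeff2}, but applied in the upper-tail direction (choosing $\epsilon = \sqrt{\alpha\log t/\mathsf{O}_{\phi_j}(t)}$ in Lemma \ref{lem:hoeff}). For HI-LCB-lite the LCB is simply $\hat{f}(\phi_j,t) - \sqrt{\alpha\log t/\mathsf{O}_{\phi_j}(t)}$, so this step is immediate. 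Summing over the $|\Phi_L|$ confidence values and adding the cost term yields
\[
\Pr(I_t) \;\leq\; \Pr(E^c) \;\leq\; (|\Phi_L|+1)\, t^{-2\alpha},
\]
which is the stated bound.

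The subtle case is HI-LCB, whose LCB involves a maximum over all $\phi_k \leq \phi_j$; a naive union bound over this inner set followed by a union bound over $\Phi_L$ would inflate the estimate by an extra $|\Phi_L|$ factor. I would sidestep this by exploiting the monotonicity of $\text{LCB}_{\phi_i}(t)$ in $\phi_i$ (the argmax set is expanding), which makes $1-\text{LCB}_{\phi_j}(t)$ non-increasing in $\phi_j$; the event $I_t$ therefore collapses to the single condition $1-\text{LCB}_{\phi_1}(t) < \text{LCB}_\gamma(t)$ at $\phi_1 = \min(\Phi_L)$. Because $\phi_1$ is the smallest index in $\Phi$, the inner max in its LCB is trivial, so a single Hoeffding bound on $\text{LCB}_{\phi_1}(t)$ plus one on $\text{LCB}_\gamma(t)$ actually yields the tighter $\Pr(I_t) \leq 2\, t^{-2\alpha}$, which trivially satisfies the claim. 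The main obstacle is precisely this monotonicity reduction; once established, the rest is a routine Hoeffding-plus-union-bound computation.
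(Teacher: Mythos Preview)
Your proposal is correct and follows essentially the same approach as the paper: define a ``good'' event on which all relevant LCBs are valid under-estimates, show that $I_t$ is contained in its complement, and control the complement via Hoeffding plus a union bound. The paper phrases the good event in terms of the one-term quantities $\hat f(\phi_j,t)-\sqrt{\alpha\log t/\mathsf O_{\phi_j}(t)}$ (your $E$ uses $\text{LCB}_{\phi_j}$ directly), but the logic is identical. Your monotonicity reduction to $\phi_1=\min\Phi_L=\min\Phi$ for HI-LCB is a nice sharpening that the paper does not make explicit; it yields $2t^{-2\alpha}$ instead of $(|\Phi_L|+1)t^{-2\alpha}$ for that policy, which of course still implies the stated bound.
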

\begin{proof}
To prove Lemma \ref{lem:Bit}, we define the following events:
\begin{equation}
\label{eq:bt12}
    \begin{aligned}
I^{(1)}_{t}:=& \,\hat{f}(\phi_j,t) \leq f(\phi_j) + \sqrt{\frac{\alpha \log t}{\mathsf{O}_{\phi_j}(t)}}\;\forall\; \phi_j \in\Phi_L,\\
I^{(2)}_t:=& \, \hat{\gamma}(t) \leq \gamma - \sqrt{\frac{\alpha \log t}{\mathsf{O}_{\gamma}(t)}}.
    \end{aligned}
\end{equation}
From Hoeffding's inequality (Lemma \ref{lem:hoeff}) and the union bound, we obtain 
\[\Pr\left(I^{(1)}_{t}\right)\geq1-{|\Phi_L|}\,{t^{-2\alpha}} \;\text{and }\Pr\left(I^{(2)}_{t}\right)\geq1-t^{-2\alpha}.\]
Given event $I^{(1)}_{t}$ occurs, $1-\text{LCB}_{\phi_j}(t)\geq 1-f(\phi_j)$ for all $\phi_j\in \Phi_L$, and  given $I^{(2)}_{t}$ occurs, $\text{LCB}_\gamma(t) \leq\gamma < 1-f(\phi_j)$ for all $ \phi_j\in \Phi_L$. Therefore, using the definition of $I_t$:
\[
\Pr(I_t )\leq \Pr\left(\overline{I^{(1)}_{t}\cap I^{(2)}_{t}}\right)\leq \Pr\left(\overline{I^{(1)}_{t}}\right)+\Pr\left(\overline{I^{(2)}_{t}}\right)\leq (|\Phi_L|+1){t^{-2\alpha}}.
\]
This proves the Lemma \ref{lem:Bit}.
\end{proof}

\begin{lemma}
\label{lem:Accept-L}
    The expected number of inferences accepted by HI-LCB and HI-LCB-lite corresponding to all the confidence measures in $\Phi_L$ till time t is bounded by
    \[ \max_{\sigma_T} \mathds{E}\left[ \sum_{\phi_i \in\Phi_L}A_{\phi_i}(t)\right]\leq (|\Phi_L|+1)\times \frac{2\alpha}{{2\alpha-1}}.\]
\end{lemma}
\begin{proof}
A sample corresponding to a confidence measure in $\Phi_L$ is accepted by HI-LCB and HI-LCB-lite at slot $n$ only if the event {$K_n$} had occurred; therefore using Lemma \ref{lem:Bit}, we show that under any arrival sequence $\sigma_T$:
\begin{equation}
\label{eq: reg-low}
\begin{aligned}
    \mathds{E}\left[ \sum_{\phi_i \in\Phi_L}A_{\phi_i}(t)\right]
    &\leq \mathds{E}\left[ \sum_{n=1}^t\mathbb{1}{\{I_n \}}\right] \\ &=  \sum_{t=1}^t\Pr\left(I_n\right) \\
    &\leq (|\Phi_L|+1)  \cdot\sum_{n=1}^t  n^{-2\alpha}\
   \\ &\leq (|\Phi_L|+1)  \cdot  \left(\frac{2\alpha}{{2\alpha-1}}\right).
    \end{aligned}
\end{equation}
This completes the proof of Lemma \ref{lem:Accept-L}.
\end{proof}

\begin{proof}[Proof of Theorem \ref{thm:adv}(a) and Theorem \ref{thm:adv}(b)] 
Given a policy $\pi$, the expected difference in cost incurred between the given policy and the optimal policy $\pi^\ast$, conditioned on their offloading decisions differing for a sample arriving at time $t$ with confidence measure $\phi(t)=\phi_i$, is equal to the quantity $\Delta_{\phi_i}$ defined in Section \ref{sec:policies}, i.e.,
\begin{equation}
\begin{aligned}
\Delta_{\phi_i}=& \mathds{E}\left[L_t^\pi-L_t^{\pi^\ast}\,|\,D_{\pi}(t)\neq D_{\pi^\ast}(t), \phi(t)=\phi_i\right] \\
=&|1-f(\phi_i)-\gamma|.
\end{aligned}
\end{equation}

Based on the regret definitions in Section \ref{sec:perf}, we define $R^{\phi_i}_{\pi}(T)$ as the regret for a policy $\pi$ corresponding to a given confidence measure $\phi_i$, i.e., 
\begin{equation}
\begin{aligned}
R^{\phi_i}_{\pi}(T)=& \mathds{E}\left[ \sum_{t=1}^T\,(L_t^\pi- L_t^{\pi^\ast})\cdot\mathbb{1}\{\phi(t)=\phi_i\}\right]\\
=&\Delta_{\phi_i}\cdot \mathds{E}\left[ \sum_{t=1}^T\,\mathbb{1}\left\{\phi(t)=\phi_i,D_{\pi}(t)\neq D_{\pi^\ast}(t)\right\}\right].
\end{aligned}
\end{equation} 
From the characterization of $\pi^\ast$ in Lemma \ref{lem:opt}, we obtain
\begin{equation}    
R^{\phi_i}_{\pi}(T)=\begin{cases}\Delta_{\phi_i}\times \mathds{E}\left[\mathsf{O}_{\phi_i}(T)\right]& \text{if $\phi(t) \in \Phi_H$},\\
\Delta_{\phi_i}\times  \mathds{E}\left[A_{\phi_i}(T)\right]  & \text{otherwise}.
\end{cases}
\end{equation}  
We analyze the total adversarial regret by summing the regrets incurred due to the individual confidence measures in $\Phi_H$ and $\Phi_L$ in the following manner:
{ \[
    \begin{aligned}
        R_{\pi}^A(T)=& \max_{\sigma_T} \sum_{\phi_i \in \Phi}R^{\phi_i}_{\pi}(T)\\
        =& \max_{\sigma_T} \left\{\sum_{\phi_i \in \Phi_H}R^{\phi_i}_{\pi}(T) + \sum_{\phi_i \in \Phi_L}R^{\phi_i}_{\pi}(T)\right\} \\
        =& \max_{\sigma_T} \left\{\sum_{\phi_i \in \Phi_H}\Delta_{\phi_i}\, \mathds{E}\left[\mathsf{O}_{\phi_i}(T)\right] + \sum_{\phi_i \in \Phi_L}\Delta_{\phi_i}\,  \mathds{E}\left[A_{\phi_i}(T)\right]\right\} \\
        \leq& \max_{\sigma_T} \left\{\sum_{\phi_i \in \Phi_H}\Delta_{\phi_i}\, \mathds{E}\left[\mathsf{O}_{\phi_i}(T)\right]\right\}  \\   & + \max_{\sigma_T}\left\{\max_{\Phi_L}\Delta_{\phi_i}\cdot \mathds{E}\left[\sum_{\phi_i \in \Phi_L}A_{\phi_i}(T)\right]\right\}.
    \end{aligned}
\]}
 
Using Lemma \ref{lem:Offload-H} and Lemma \ref{lem:Accept-L}, we obtain the following upper bound on the cumulative regret for both the policies $\pi=\text{HI-LCB}$ and $\pi=\text{HI-LCB-lite}$ for $T$ samples under any arrival sequence and i.i.d offloading costs:
\[
\begin{aligned}
&R_{\pi}^A(T) \leq \sum_{\phi_i \in \Phi_H} \; \frac{16\alpha}{\Delta_{\phi_i}}\log T\; +\mathbf{C}(\Delta), \text{with}\\
    &\mathbf{C}(\Delta) = \sum_{\phi_i \in \Phi_H} \;\left( \frac{4\alpha \Delta_{\phi_i}}{2\alpha-1}  \right)+ \max_{ \Phi_L}\; 2\alpha\Delta_{\phi_j} \left(\frac{|\Phi_L|+1 }{{2\alpha-1}}\right).
    \end{aligned}
\]
This completes the proof of Theorem \ref{thm:adv}(a) and Theorem \ref{thm:adv}(b).
\end{proof}

\subsection{Proof of Theorem \ref{thm:adv}(c) and Theorem \ref{thm:adv}(d)} 
For the setup with a constant offloading cost $\gamma$, discussed in Remark \ref{rem:fc}, the analysis proceeds in a manner similar to that for i.i.d. costs. We define the following events:
\begin{equation}
\begin{aligned}
 \label{eq:G'it}   
J_{i,t}&:=1-\text{LCB}_{\phi_i}(t)\geq \gamma,\\
K_{i,t}&:= \mathsf{O}_{\phi_i}(t) > 4\alpha\log t/\Delta^2_{\phi_i},\\
N_t&:=\bigcap_{\Phi_L}\overline{J_{j,t}}: \max_{\Phi_L}\left\{1-\text{LCB}_{\phi_j}(t)\right\} < \gamma.
\end{aligned}
\end{equation}

We use Lemma \ref{lem:Ait2} and Lemma \ref{lem:Offload-H2} to characterize an upper bound on the adversarial regret of HI-LCB and HI-LCB-lite associated with the confidence measures in $\Phi_H$ in a setup with constant offloading cost.
\begin{lemma}
\label{lem:Ait2}
 Under HI-LCB and HI-LCB-lite with constant offloading cost, for any $\phi_i \in \Phi_H$,
\[
\Pr(J_{i,t} | {K_{i,t}}) \leq t^{-2\alpha} \text{ for any } t \in\{1,\dots, T\}.
\]
\end{lemma}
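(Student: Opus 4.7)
The plan is to mirror the argument used for Lemma \ref{lem:Ait}, but simplified: since $\gamma$ is known exactly in the setting of Remark \ref{rem:fc}, the LCB for the offloading cost is replaced by the deterministic quantity $\gamma$, and so only one Hoeffding bound --- on $\hat f(\phi_i, t)$ --- is required. This is precisely why $K_{i,t}$ is defined with the constant $4\alpha$ (versus $16\alpha$ in $H_{i,t}$) and why the target failure probability is $t^{-2\alpha}$ rather than $2 t^{-2\alpha}$.

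First, I would note the common LCB lower bound satisfied by both policies: from \eqref{eq:LCB_def} (taking $\phi_j = \phi_i$ in the max) for HI-LCB and directly from \eqref{eq:vanillaLCB} for HI-LCB-lite,
\[
\text{LCB}_{\phi_i}(t) \;\geq\; \hat f(\phi_i,t) - \sqrt{\tfrac{\alpha \log t}{\mathsf{O}_{\phi_i}(t)}}.
\]
Second, I would apply Hoeffding's inequality (Lemma \ref{lem:hoeff}) to the $\mathsf{O}_{\phi_i}(t)$ i.i.d.\ Bernoulli outcomes with mean $f(\phi_i)$, choosing $\epsilon = \sqrt{\alpha \log t / \mathsf{O}_{\phi_i}(t)}$ to obtain the one-sided tail
\[
\Pr\!\Bigl(\hat f(\phi_i,t) < f(\phi_i) - \sqrt{\tfrac{\alpha \log t}{\mathsf{O}_{\phi_i}(t)}}\Bigr) \;\leq\; t^{-2\alpha}.
\]

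Third, I would invoke the conditioning event $K_{i,t}$: the bound $\mathsf{O}_{\phi_i}(t) > 4\alpha \log t / \Delta_{\phi_i}^2$ yields $\sqrt{\alpha \log t / \mathsf{O}_{\phi_i}(t)} < \Delta_{\phi_i}/2$. Chaining this with the LCB inequality and the Hoeffding bound, with probability at least $1 - t^{-2\alpha}$ we obtain $\text{LCB}_{\phi_i}(t) > f(\phi_i) - \Delta_{\phi_i}$. Using the key identity $\Delta_{\phi_i} = \gamma - (1 - f(\phi_i))$ valid for $\phi_i \in \Phi_H$, this simplifies to $\text{LCB}_{\phi_i}(t) > 1 - \gamma$, i.e., $1 - \text{LCB}_{\phi_i}(t) < \gamma$, which is the negation of the defining inequality of $J_{i,t}$.

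Finally, I would conclude $\Pr(J_{i,t} \mid K_{i,t}) \leq t^{-2\alpha}$, noting that the extra clause $\phi(t) = \phi_i$ in the definition of $J_{i,t}$ can only make the event smaller and so does not affect the bound. I do not anticipate any real obstacle: the proof is a strict simplification of Lemma \ref{lem:Ait}, and the only subtlety worth flagging is that a single Hoeffding bound --- rather than a union bound over two such inequalities --- is what removes the factor of two in the tail probability and the factor of four in the threshold on $\mathsf{O}_{\phi_i}(t)$.
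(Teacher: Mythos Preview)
Your proposal is correct and follows essentially the same route the paper indicates (it explicitly defers to the proof of Lemma~\ref{lem:Ait}): a single Hoeffding bound on $\hat f(\phi_i,t)$, combined with the deterministic LCB lower bound $\text{LCB}_{\phi_i}(t)\ge \hat f(\phi_i,t)-\sqrt{\alpha\log t/\mathsf{O}_{\phi_i}(t)}$ and the threshold $K_{i,t}$, yields $1-\text{LCB}_{\phi_i}(t)<\gamma$ with failure probability at most $t^{-2\alpha}$. Your observation that dropping the second Hoeffding bound is precisely what tightens $16\alpha$ to $4\alpha$ and $2t^{-2\alpha}$ to $t^{-2\alpha}$ is exactly the point.
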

{\color{black} The proof of Lemma \ref{lem:Ait2} uses arguments similar to the proof of Lemma \ref{lem:Ait}.}
\begin{lemma}
    \label{lem:Offload-H2}
For any $\alpha>0.5$, the expected number of offloads for samples corresponding to any confidence measure $\phi_i \in \Phi_H$ for HI-LCB and HI-LCB-lite with known fixed offloading cost is bounded by
\[\max_{\sigma_T}\mathds{E}[\mathsf{O}_{\phi_i}(t)] \leq \frac{4\alpha \log t}{\Delta_{\phi_i}^2} +\frac{2\alpha}{2\alpha-1}. \]
\end{lemma}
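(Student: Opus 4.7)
The plan is to mirror the proof of Lemma \ref{lem:Offload-H}, replacing the joint concentration on $\hat{f}(\phi_i)$ and $\hat{\gamma}$ with a single concentration on $\hat{f}(\phi_i)$, since $\gamma$ is now known exactly. Working with the events $J_{i,n}$ and $K_{i,n}$ defined in \eqref{eq:G'it}, I would write the number of offloads as $\mathsf{O}_{\phi_i}(t) = \sum_{n=1}^{t}\mathbb{1}\{J_{i,n}\}$ and push the expectation inside the sum.

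Next I would apply the law of total expectation slot by slot, conditioning on whether the "sufficiently explored" event $K_{i,n}$ holds:
\[
\mathds{E}[\mathsf{O}_{\phi_i}(t)] \le \sum_{n=1}^{t}\mathds{E}\!\left[\mathbb{1}\{J_{i,n}\}\mid \overline{K_{i,n}}\right] + \sum_{n=1}^{t}\Pr\!\left(J_{i,n}\mid K_{i,n}\right).
\]
For the first sum, observe that on $\overline{K_{i,n}}$ we have $\mathsf{O}_{\phi_i}(n)\le 4\alpha\log n/\Delta_{\phi_i}^2 \le 4\alpha\log t/\Delta_{\phi_i}^2$, and since every contributing indicator increments $\mathsf{O}_{\phi_i}$ by one, the cumulative contribution from this "under-explored" regime is at most $4\alpha\log t/\Delta_{\phi_i}^2$, uniformly in the arrival sequence $\sigma_T$. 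For the second sum, Lemma \ref{lem:Ait2} yields $\Pr(J_{i,n}\mid K_{i,n})\le n^{-2\alpha}$, whence
\[
\sum_{n=1}^{t} n^{-2\alpha}\;\le\;\zeta(2\alpha)\;\le\;\frac{2\alpha}{2\alpha-1}
\]
for any $\alpha>1/2$, using the standard bound $\zeta(k)\le k/(k-1)$ for $k>1$. Adding the two contributions and maximizing over $\sigma_T$ gives the claimed inequality.

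The key mechanism driving the improvement over Lemma \ref{lem:Offload-H} is that eliminating the uncertainty in $\gamma$ removes both the second Hoeffding bound on $\hat{\gamma}$ and the doubling of the confidence radius that appears when one compares $1-\text{LCB}_{\phi_i}(t)$ to $\text{LCB}_\gamma(t)$ rather than to $\gamma$ itself. This lets the threshold in $K_{i,t}$ be $4\alpha\log t/\Delta_{\phi_i}^2$ rather than $16\alpha\log t/\Delta_{\phi_i}^2$, and drops a factor of $2$ from the union-bounded tail, producing the sharper constants $4\alpha$ and $2\alpha/(2\alpha-1)$. I do not expect any substantive obstacle here; the main bookkeeping task is to verify that the Hoeffding radius $\sqrt{\alpha\log t/\mathsf{O}_{\phi_i}(t)}$ combines with the threshold in $K_{i,t}$ precisely so that $1-\text{LCB}_{\phi_i}(t)<\gamma$ on $K_{i,t}$ with probability at least $1-t^{-2\alpha}$, which is exactly the statement of Lemma \ref{lem:Ait2} that we are allowed to invoke.
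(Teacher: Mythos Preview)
Your proposal is correct and takes essentially the same approach as the paper's proof: the same decomposition into the under-explored regime (bounded deterministically by the threshold in $K_{i,n}$) plus the tail term controlled via Lemma~\ref{lem:Ait2} and the $\zeta(2\alpha)\le 2\alpha/(2\alpha-1)$ bound. Your added explanation of why the constants sharpen from $16\alpha$ to $4\alpha$ and from $4\alpha/(2\alpha-1)$ to $2\alpha/(2\alpha-1)$ is accurate and goes slightly beyond what the paper spells out.
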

\begin{proof}
Using Lemma \ref{lem:Ait2}, we upper bound the expected number of offloads any confidence measure $\phi_i \in \Phi_H$ under any arrival sequence $\sigma_T$ in a manner similar to what was done in Lemma \ref{lem:Offload-H} with
\[
\begin{aligned}
    \mathds{E}[\mathsf{O}_{\phi_i}(t)]&= \mathds{E}\left[\sum_{n=1}^t \mathbb{1}\left\{J_{i,n},\,\phi(n) =\phi_i\right\}\right]\\
    &\leq \mathds{E}\left[\sum_{n=1}^t \mathbb{1}\{J_{i,n}\}\right]\\
    &\leq \sum_{n=1}^t\mathds{E}\left[\mathbb{1}\{J_{i,n}\}|\overline{K_{i,n}}\right] + \sum_{n=1}^t\Pr(J_{i,n}|{K_{i,n}})\\
    &\leq \frac{4\alpha \log t}{\Delta_{\phi_i}^2} + \frac{2\alpha}{2\alpha-1}. 
\end{aligned}
\]

This proves the Lemma \ref{lem:Offload-H2}.
\end{proof}
We use Lemma \ref{lem:Bit2} and Lemma \ref{lem:Accept-L2} to characterize an upper bound for the adversarial regret of HI-LCB and HI-LCB-lite associated with the confidence measures in $\Phi_L$ under a constant offloading cost.
\begin{lemma}
\label{lem:Bit2} For HI-LCB and HI-LCB-lite with known constant offloading cost,
at any time $t\leq T$, \(
\Pr(N_t) \leq  |\Phi_L|t^{-2\alpha}.\)
\end{lemma}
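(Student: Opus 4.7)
The plan is to adapt the proof of Lemma \ref{lem:Bit} to the constant-offloading-cost setting, where $\text{LCB}_\gamma(t) = \gamma$ deterministically by Remark \ref{rem:fc}. Because the offloading cost is known exactly, the auxiliary event $I^{(2)}_t$ from the i.i.d.\ cost proof becomes trivial, and the union bound drops the extra $+1$ contribution, leaving only the $|\Phi_L|$ terms indexed by the confidence measures in $\Phi_L$.

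First, I would define the analogue of $I^{(1)}_t$ from \eqref{eq:bt12}, namely the ``good'' event
\[
M_t : \hat{f}(\phi_j, t) \leq f(\phi_j) + \sqrt{\alpha \log t / \mathsf{O}_{\phi_j}(t)} \text{ for all } \phi_j \in \Phi_L.
\]
A direct application of Hoeffding's inequality (Lemma \ref{lem:hoeff}) with $\epsilon = \sqrt{\alpha \log t / \mathsf{O}_{\phi_j}(t)}$ yields $\Pr(\hat{f}(\phi_j, t) - f(\phi_j) > \epsilon) \leq t^{-2\alpha}$ for each fixed $\phi_j$; a union bound over the $|\Phi_L|$ elements of $\Phi_L$ then gives $\Pr(\overline{M_t}) \leq |\Phi_L|\, t^{-2\alpha}$.

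Next, I would establish the set-theoretic inclusion $M_t \subseteq \overline{N_t}$. For HI-LCB-lite, $\text{LCB}_{\phi_j}(t) = \hat{f}(\phi_j, t) - \sqrt{\alpha \log t / \mathsf{O}_{\phi_j}(t)}$, so $M_t$ directly implies $\text{LCB}_{\phi_j}(t) \leq f(\phi_j)$ for every $\phi_j \in \Phi_L$. For HI-LCB, the LCB is a maximum over $\phi_k \leq \phi_j$; the key observation is that $\phi_k \leq \phi_j \in \Phi_L$ combined with the monotonicity of $f(\cdot)$ forces $\phi_k \in \Phi_L$, so the event $M_t$ still bounds the corresponding term as $\hat{f}(\phi_k, t) - \sqrt{\alpha \log t / \mathsf{O}_{\phi_k}(t)} \leq f(\phi_k) \leq f(\phi_j)$, and taking the max preserves the bound. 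In either case $\text{LCB}_{\phi_j}(t) \leq f(\phi_j) \leq 1 - \gamma$ for every $\phi_j \in \Phi_L$, whence $1 - \text{LCB}_{\phi_j}(t) \geq \gamma$ holds simultaneously for all such $\phi_j$, which contradicts $N_t$.

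Combining the two steps yields $\Pr(N_t) \leq \Pr(\overline{M_t}) \leq |\Phi_L|\, t^{-2\alpha}$, as claimed. The only nonroutine step is the monotonicity argument that handles the $\max$ in the HI-LCB definition of $\text{LCB}_{\phi_j}$; everything else is a transcription of the i.i.d.-cost proof with the $I^{(2)}_t$ contribution removed.
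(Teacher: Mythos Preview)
Your proposal is correct and follows exactly the approach the paper indicates (the paper itself just says ``uses arguments similar to the proof of Lemma~\ref{lem:Bit}''): drop the $I^{(2)}_t$ event since $\text{LCB}_\gamma=\gamma$ is deterministic, keep only the Hoeffding-based event $M_t=I^{(1)}_t$, and apply the union bound over $\Phi_L$. Your explicit monotonicity argument showing $\phi_k\leq\phi_j\in\Phi_L\Rightarrow\phi_k\in\Phi_L$, needed to control the $\max$ in HI-LCB's $\text{LCB}_{\phi_j}$, is a detail the paper's proof of Lemma~\ref{lem:Bit} glosses over, so your write-up is in fact slightly more complete.
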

{\color{black} The proof of Lemma \ref{lem:Bit2} uses arguments similar to the proof of Lemma \ref{lem:Bit}.}
\begin{lemma}
    \label{lem:Accept-L2}
    The expected aggregate number of accepts corresponding to the samples with confidence measures in $\Phi_L$ with constant offloading costs till time t is bounded by
    \[ \max_{\sigma_T} \mathds{E}\left[ \sum_{\phi_i \in\Phi_L}A_{\phi_i}(t)\right]\leq \frac{2\alpha |\Phi_L| }{{2\alpha-1}}.\]
\end{lemma}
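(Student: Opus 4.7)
The plan is to mimic the proof of Lemma \ref{lem:Accept-L}, substituting the constant-cost event $N_t$ for the i.i.d.-cost event $I_t$ and invoking Lemma \ref{lem:Bit2} in place of Lemma \ref{lem:Bit}. First I would observe that under HI-LCB and HI-LCB-lite with a known constant offloading cost, a sample with confidence measure $\phi_i \in \Phi_L$ is accepted at slot $n$ only if $1-\text{LCB}_{\phi_i}(n) < \gamma$. Because at most one sample arrives per slot, this contribution to the left-hand sum is dominated by $\mathbb{1}\{N_n\}$ after taking the max over $\Phi_L$, giving
\[
\sum_{\phi_i \in \Phi_L} A_{\phi_i}(t) \;\leq\; \sum_{n=1}^t \mathbb{1}\{N_n\}.
\]

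Next, I would take expectations on both sides, exchange expectation and sum, and apply Lemma \ref{lem:Bit2} to obtain
\[
\mathds{E}\!\left[\sum_{\phi_i \in \Phi_L} A_{\phi_i}(t)\right] \leq \sum_{n=1}^t \Pr(N_n) \leq |\Phi_L|\sum_{n=1}^t n^{-2\alpha},
\]
after which the proof concludes by the same Riemann-zeta tail estimate used at the end of the proof of Lemma \ref{lem:Offload-H}: $\sum_{n=1}^\infty n^{-2\alpha} \leq \zeta(2\alpha) \leq \frac{2\alpha}{2\alpha-1}$ for any $\alpha > 1/2$. This yields the claimed bound $\frac{2\alpha |\Phi_L|}{2\alpha-1}$, uniformly over the arrival sequence $\sigma_T$.

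The argument is essentially routine once Lemma \ref{lem:Bit2} is in hand, so there is no serious obstacle. The only minor step meriting a careful sentence is the domination, where one argues that the per-slot acceptance event for $\Phi_L$ is contained in the max-over-$\Phi_L$ condition defining $N_n$. This is precisely what explains the improvement in the leading constant relative to Lemma \ref{lem:Accept-L}: with $\gamma$ known, the union bound in Lemma \ref{lem:Bit2} runs only over the $|\Phi_L|$ values of $\Phi_L$, rather than additionally absorbing a deviation term for the unknown mean offloading cost, producing $|\Phi_L|$ rather than $|\Phi_L|+1$ in the final bound.
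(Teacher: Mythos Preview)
Your proposal is correct and follows essentially the same approach as the paper's own proof: both dominate the per-slot acceptance indicator for $\Phi_L$ by $\mathbb{1}\{N_n\}$, invoke Lemma \ref{lem:Bit2} to bound $\Pr(N_n)\le |\Phi_L|\,n^{-2\alpha}$, and then sum via the zeta bound $\zeta(2\alpha)\le \tfrac{2\alpha}{2\alpha-1}$. Your added remark explaining why the constant drops from $|\Phi_L|+1$ to $|\Phi_L|$ (no deviation term needed for the known $\gamma$) is a helpful elaboration, but the argument itself is the same as the paper's.
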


\begin{proof}  
Using Lemma \ref{lem:Bit2},  we bound the expected aggregate number of accepts corresponding to the samples with confidence measures in $\Phi_L$ for any arrival sequence $\sigma_T$ with 
\[
   \mathds{E}\left[ \sum_{\phi_i \in \Phi_L}A_{\phi_i}(T)\right]
    \leq  \sum_{t=1}^T\Pr\left(N_t\right)\leq    \left(\frac{2\alpha|\Phi_L|}{{2\alpha-1}}\right).
\]
This completes the proof for Lemma \ref{lem:Accept-L2}.
\end{proof}
\begin{proof}[Proof of Theorem \ref{thm:adv}(c) and Theorem \ref{thm:adv}(d)]
For any policy $\pi$ operating for a time  $T$, we know that
 \[ 
 \begin{aligned}
R_{\pi}^A(T)\leq \max_{\sigma_T}  & \left\{\sum_{\phi_i \in \Phi_H}\Delta_{\phi_i}\, \mathds{E}\left[\mathsf{O}_{\phi_i}(T)\right]\right\} +\\   &\max_{\sigma_T}\left\{\max_{\Phi_L}\Delta_{\phi_i}\cdot\mathds{E}\left[\sum_{\phi_i \in \Phi_L}A_{\phi_i}(T)\right]\right\}.
\end{aligned}
\]

Using Lemma \ref{lem:Offload-H} and Lemma \ref{lem:Accept-L}, we obtain the upper bound on the cumulative regret for both  $\pi=\text{HI-LCB}$ and $\pi=\text{HI-LCB-lite}$ for $T$ samples under any adversarial arrival sequence and a known fixed offloading cost as
\[
\begin{aligned}
&R_{\pi}^A(T) \leq \sum_{\phi_i \in \Phi_H} \; \frac{4\alpha}{\Delta_{\phi_i}}\log T\; +\mathbf{C_2}(\Delta), \text{ where}\\
    &\mathbf{C_2}(\Delta) = \frac{2\alpha }{2\alpha-1} \left(\sum_{\phi_i \in\Phi_H} \;\Delta_{\phi_i} + |\Phi_L|\max_{ \Phi_L}\,\Delta_{\phi_j} \right).
    \end{aligned}
\]
Hence proving Theorem \ref{thm:adv} (c) and Theorem \ref{thm:adv} (d).
\end{proof}
\subsection{Proof of Theorem \ref{thm:stoch}}
{
Our focus herein will be on the policy $\pi = \text{HI-LCB}$ as the regret bounds for HI-LCB-lite under a stochastic arrival process (Theorem \ref{thm:stoch} (b), (d)) are identical to those under the adversarial settings (Theorem \ref{thm:adv} (b), (d)).

We define the event $\Lambda_{i,t}$ for a given time-slot $t$ and confidence measure $\phi_i \in \Phi$  in the following manner:
\begin{equation}
\label{eq:lambda-def}
    \Lambda_{i,t}:= 1-\text{LCB}_{\phi_i}(t)\geq \text{LCB}_\gamma(t).
\end{equation}
We use Lemma \ref{lem:Offload-H3} and Lemma \ref{lem:iid} to provide a regret upper-bound for HI-LCB with stochastic arrivals and i.i.d. offloading costs.
\begin{lemma}
    \label{lem:iid}
    Under Assumption \ref{as:stoch}, for any $\phi_i \in \Phi$,
    \[\mathds{E}[\mathsf{O}_{\phi_i}(t)]= w_i\cdot\mathds{E}\left[ \sum_{n=1}^t \mathbb{1}\{\Lambda_{i,n}\}\right]\]
\end{lemma}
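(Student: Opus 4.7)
The plan is to decouple the two ingredients inside $G_{i,n}$, the arrival $\{\phi(n) = \phi_i\}$ and the LCB--based condition $\Lambda_{i,n}$, by exploiting that under Assumption \ref{as:stoch} the arrival at time $n$ is independent of the entire history available to the policy at the start of slot $n$.

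First, I would rewrite $\mathsf{O}_{\phi_i}(t)$ in the form that makes the decoupling explicit. By the definition of HI-LCB in Algorithm \ref{alg:hilcb}, a sample at time $n$ contributes to $\mathsf{O}_{\phi_i}(t)$ if and only if $\phi(n)=\phi_i$ and the offload branch is taken, i.e., $1 - \text{LCB}_{\phi_i}(n) \geq \text{LCB}_\gamma(n)$. Using the notation from \eqref{eq:lambda-def}, this yields
\[
\mathsf{O}_{\phi_i}(t) \;=\; \sum_{n=1}^{t} \mathbb{1}\{\phi(n)=\phi_i\}\cdot\mathbb{1}\{\Lambda_{i,n}\}.
\]

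Second, I would argue independence. Let $\mathcal{F}_{n-1}$ be the sigma-algebra generated by the confidence measures $\phi(1),\dots,\phi(n-1)$, the offloading decisions $D_\pi(1),\dots,D_\pi(n-1)$, and all observations (labels and offloading costs) obtained on the samples offloaded before slot $n$. By construction, the estimators $\hat{f}(\phi_j,n)$, $\hat{\gamma}(n)$ and the counts $\mathsf{O}_{\phi_j}(n)$, $\mathsf{O}_{\gamma}(n)$ are all $\mathcal{F}_{n-1}$-measurable, hence so are $\text{LCB}_{\phi_i}(n)$, $\text{LCB}_\gamma(n)$, and therefore $\mathbb{1}\{\Lambda_{i,n}\}$. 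Assumption \ref{as:stoch} states that $\phi(n)$ is i.i.d.\ across $n$, so $\phi(n)$ is independent of $\mathcal{F}_{n-1}$, with $\Pr(\phi(n)=\phi_i)=w_i$.

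Third, taking expectations and applying this conditional independence term by term gives
\[
\mathds{E}[\mathsf{O}_{\phi_i}(t)] = \sum_{n=1}^{t}\mathds{E}\!\left[\mathbb{1}\{\Lambda_{i,n}\}\cdot\Pr(\phi(n)=\phi_i\mid\mathcal{F}_{n-1})\right] = w_i\sum_{n=1}^{t}\Pr(\Lambda_{i,n}),
\]
and then pulling the sum back inside the expectation produces the stated identity. The only step that requires care is identifying what is and is not $\mathcal{F}_{n-1}$-measurable; once that filtration bookkeeping is done, the rest is a single application of the tower property plus linearity of expectation, so I do not expect any real obstacle.
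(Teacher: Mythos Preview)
Your proposal is correct and follows essentially the same approach as the paper's proof: write $\mathsf{O}_{\phi_i}(t)=\sum_{n=1}^{t}\mathbb{1}\{\Lambda_{i,n}\}\cdot\mathbb{1}\{\phi(n)=\phi_i\}$ and then factor the expectation using the independence of $\phi(n)$ from the history. Your explicit filtration argument (showing $\mathbb{1}\{\Lambda_{i,n}\}$ is $\mathcal{F}_{n-1}$-measurable and then conditioning) is in fact a cleaner justification of the factorization step than the paper's own presentation, which simply asserts the product of expectations.
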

\begin{proof}
Using the definition of $\mathsf{O}_{\phi_i}(t)$, under Assumption \ref{as:stoch} for the i.i.d stochastic arrival process, we have
\[
\begin{aligned}
\mathds{E}[\mathsf{O}_{\phi_i}(t)]&= \mathds{E}\left[ \sum_{n=1}^t \mathbb{1}\{\Lambda_{i,n},\phi(n)=\phi_i\}\right]\\
= &\mathds{E}\left[ \sum_{n=1}^t \mathbb{1}\{\Lambda_{i,n}\}\cdot\mathbb{1}\{\phi(n)=\phi_i\}\right]\\
=&\mathds{E}\left[ \sum_{n=1}^t \mathbb{1}\{\Lambda_{i,n}\}\right]\cdot \mathds{E}\left[\mathbb{1}\{\phi(n)=\phi_i\}\right]\\
=& w_i\cdot\mathds{E}\left[ \sum_{n=1}^t \mathbb{1}\{\Lambda_{i,n}\}\right]
\end{aligned}
\]
This completes the proof for Lemma \ref{lem:iid}.\end{proof}

\begin{lemma}
\label{lem:Offload-H3}
For any $\alpha>0.5$, the expected number of offloads for samples corresponding to any confidence measure $\phi_i \in \Phi_H$ for HI-LCB with the stochastic arrival sequence provided in Assumption \ref{as:stoch} can be upper-bounded by
\[ \mathds{E}[\mathsf{O}_{\phi_i}(T)] \leq w_i \cdot \min_{\Phi_H^{(i)}}\left( \frac{16\alpha\log T}{\Delta_{\phi_j}^2 w_j} + \frac{4\alpha w_j^{-1}}{2\alpha-1} \right).
\]
\end{lemma}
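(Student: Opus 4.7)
The plan is to reduce the bound on $\mathds{E}[\mathsf{O}_{\phi_i}(T)]$ to the one already established in Lemma \ref{lem:Offload-H} by exploiting the monotone structure of the HI-LCB lower confidence bound, so that the statistical strength accumulated at any smaller $\phi_j \in \Phi_H^{(i)}$ can be borrowed for $\phi_i$. As a first step I would invoke Lemma \ref{lem:iid} to write $\mathds{E}[\mathsf{O}_{\phi_i}(T)] = w_i \cdot \mathds{E}\left[\sum_{n=1}^T \mathbb{1}\{\Lambda_{i,n}\}\right]$, which converts the offload count into a sum over decision events $\Lambda_{i,n}: 1-\text{LCB}_{\phi_i}(n) \geq \text{LCB}_\gamma(n)$ that are decoupled from the actual arrival $\phi(n)$.

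The crucial structural step is to observe that, under the HI-LCB definition in \eqref{eq:LCB_def}, the LCB is non-decreasing in the confidence-measure index: for any $\phi_j \leq \phi_i$, the max defining $\text{LCB}_{\phi_j}(n)$ is taken over a subset of the terms appearing in $\text{LCB}_{\phi_i}(n)$, so $\text{LCB}_{\phi_j}(n) \leq \text{LCB}_{\phi_i}(n)$. Combining this with the definitions of the offload events yields the pointwise inequality $\mathbb{1}\{\Lambda_{i,n}\} \leq \mathbb{1}\{\Lambda_{j,n}\}$ for every $\phi_j \leq \phi_i$ and every $n$.

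Summing over $n$ and applying Lemma \ref{lem:iid} in the reverse direction at $\phi_j$ gives $\mathds{E}\left[\sum_{n=1}^T \mathbb{1}\{\Lambda_{i,n}\}\right] \leq \mathds{E}[\mathsf{O}_{\phi_j}(T)]/w_j$. For any $\phi_j \in \Phi_H^{(i)}$, Lemma \ref{lem:Offload-H} provides $\mathds{E}[\mathsf{O}_{\phi_j}(T)] \leq 16\alpha \log T/\Delta_{\phi_j}^2 + 4\alpha/(2\alpha-1)$; this adversarial bound continues to hold in expectation under stochastic arrivals, since the stochastic expectation is dominated by the worst-case over arrival sequences. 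Substituting back and optimizing over $\phi_j \in \Phi_H^{(i)}$ then produces the claimed inequality.

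The only non-routine step, and hence the main obstacle, is the monotonicity argument in the second paragraph. It is precisely the mechanism by which HI-LCB's shape-aware LCB exploits the non-decreasing nature of $f(\cdot)$ to pool information from smaller confidence measures, yielding a tighter guarantee than HI-LCB-lite under stochastic arrivals. A minor but important care point is to restrict $\phi_j$ to $\Phi_H^{(i)}$ rather than to all $\phi_j \leq \phi_i$, so that $1-f(\phi_j) < \gamma$ and the Hoeffding-based argument underlying Lemma \ref{lem:Offload-H} remains meaningful at $\phi_j$.
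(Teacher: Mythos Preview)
Your proposal is correct and follows essentially the same route as the paper: invoke Lemma~\ref{lem:iid} to factor out $w_i$, use the monotonicity of $\text{LCB}_{\phi_i}$ from \eqref{eq:LCB_def} to obtain the pointwise inequality $\mathbb{1}\{\Lambda_{i,n}\}\leq\mathbb{1}\{\Lambda_{j,n}\}$ for $\phi_j\leq\phi_i$, apply Lemma~\ref{lem:iid} in reverse at $\phi_j$, and then plug in the adversarial bound from Lemma~\ref{lem:Offload-H} before minimizing over $\Phi_H^{(i)}$. Your remark about restricting to $\Phi_H^{(i)}$ so that Lemma~\ref{lem:Offload-H} applies is exactly the care point the paper also implicitly respects.
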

\begin{proof}
From the definition of $\Lambda_{i,t}$ in \eqref{eq:lambda-def}, and using the non-decreasing nature of $\text{LCB}_{\phi_i}(t)$ in HI-LCB, specifically with $\displaystyle\text{LCB}_{\phi_i}(t) \geq \max_{j \leq i} \text{LCB}_{\phi_j}(t)$, we obtain \[\mathbb{1}\{\Lambda_{i,t}\} \leq \min_{j \leq i} \mathbb{1}\{\Lambda_{j,t}\} \text{ for any $\phi_i \in \Phi$ and $t \in \{1,2,\dots T\}$}.\]

Using Lemma \ref{lem:Offload-H} and Lemma \ref{lem:iid}, we upper-bound the expected number of arrivals for a confidence measure $\phi \in \Phi_H$ for HI-LCB in the following manner:
\[
\begin{aligned}
\mathds{E}[\mathsf{O}_{\phi_i}(T)] &= w_i\cdot\mathds{E}\left[ \sum_{n=1}^t \mathbb{1}\{\Lambda_{i,n}\}\right]\\
&\leq  w_i\cdot \min_{j\leq i,\phi_j \in\Phi_H} \mathds{E}\left[ \sum_{n=1}^t \mathbb{1}\{\Lambda_{j,n}\}\right]\\
&\leq  w_i\cdot \min_{\Phi_H^{(i)}}\left( \max_{\sigma_T}{\mathds{E}[\mathsf{O}_{\phi_j}(T)]}\cdot w_j^{-1}\right) \\
&\leq w_i \cdot \min_{\Phi_H^{(i)}}\left( \frac{16\alpha\log T}{\Delta_{\phi_j}^2 w_j} + \frac{4\alpha w_j^{-1}}{2\alpha-1} \right).
\end{aligned}
\]
This completes the proof for Lemma \ref{lem:Offload-H3}.
\end{proof}}
\begin{proof}[Proof of Theorem \ref{thm:stoch} (a)]
We use Lemma \ref{lem:Offload-H3} to upper-bound the regret due to the offloads corresponding to the confidence measures in $\Phi_H$. Meanwhile, for the confidence measures in $\Phi_L$, with the stochastic arrival process being a special case of the general adversarial arrivals, the following bound follows directly from Lemma \ref{lem:Accept-L} with
\[\mathds{E}\left[\sum_{\phi_i \in \Phi_L}A_{\phi_i}(T)\right]  \leq 2\alpha \left(\frac{|\Phi_L|+1 }{{2\alpha-1}}\right). \]

For stochastic arrivals, we know that
\[ R_{\pi}^S(T)\leq \sum_{\phi_i \in \Phi_H}\Delta_{\phi_i}\, \mathds{E}\left[\mathsf{O}_{\phi_i}(T)\right] +  \max_{\Phi_L}\Delta_{\phi_i}\mathds{E}\left[\sum_{\phi_i \in \Phi_L}A_{\phi_i}(T)\right].\]
Therefore, using Lemma \ref{lem:Offload-H3} and Lemma \ref{lem:Accept-L},  we obtain
\[
\begin{aligned}
&R_{\pi}^S(T) \leq \sum_{\phi_i \in \Phi_H} \; \min_{\phi_j \in\Phi_H^{(i)}} \left\{\frac{16\alpha w_i \Delta_{\phi_i} }{w_j\Delta^2_{\phi_j}}\;\right\}\log T\; +\mathbf{C_3}(\Delta), \text{with}\\
    &\mathbf{C_3}(\Delta)=\frac{2\alpha }{2\alpha-1}\left(\sum_{ \Phi_H}\Delta_{\phi_i}  \min_{\Phi^{(i)}_H}\left(   \frac{ 2w_i}{w_j} \right)+\left(|\Phi_L|+1\right) \max_{ \Phi_L}\; \Delta_{\phi_j} \right).
    \end{aligned}
\]
This completes the proof for Theorem \ref{thm:stoch} (a).
\end{proof}
We  use Lemma \ref{lem:Offload-H3} to provide a regret bound for HI-LCB with stochastic arrivals and constant offloading cost.
\begin{lemma}
\label{lem:Offload-H4}
For any $\alpha>0.5$, the expected number of offloads for samples corresponding to any confidence measure $\phi_i \in \Phi_H$ for HI-LCB with the stochastic arrival sequence provided in Assumption \ref{as:stoch} and known fixed offloading cost can be upper-bounded by
\[ \mathds{E}[\mathsf{O}_{\phi_i}(T)] \leq  w_i \cdot \min_{\phi_j \in\Phi_H^{(i)}}\left( \frac{4\alpha\log T}{\Delta_{\phi_j}^2 w_j} + \frac{2\alpha w_j^{-1}}{2\alpha-1} \right)\cdot \]
\end{lemma}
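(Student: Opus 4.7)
The plan is to mirror the proof of Lemma \ref{lem:Offload-H3}, replacing the adversarial bound from Lemma \ref{lem:Offload-H} (which handles i.i.d.\ offloading costs) with the tighter adversarial bound from Lemma \ref{lem:Offload-H2} (which handles the known fixed offloading cost case). The two ingredients developed earlier in the appendix already do most of the work, so this lemma is essentially a composition.

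First, I would invoke Lemma \ref{lem:iid}, which uses the i.i.d.\ stochastic arrivals (Assumption \ref{as:stoch}) to write
\[
\mathds{E}[\mathsf{O}_{\phi_i}(T)] \,=\, w_i \cdot \mathds{E}\!\left[\,\sum_{n=1}^{T}\mathbb{1}\{\Lambda_{i,n}\}\right],
\]
where $\Lambda_{i,n}$ is the event $1 - \text{LCB}_{\phi_i}(n) \geq \text{LCB}_{\gamma}(n)$; in the known fixed offloading cost setting of Remark \ref{rem:fc}, the right-hand side of this event simplifies to $\gamma$, matching the structure of the events $J_{i,t}$ defined in \eqref{eq:G'it}. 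Next, I would exploit the fact that HI-LCB uses the max-based definition of $\text{LCB}_{\phi_i}$ in \eqref{eq:LCB_def}, which enforces $\text{LCB}_{\phi_i}(n) \geq \text{LCB}_{\phi_j}(n)$ for every $\phi_j \leq \phi_i$. Consequently, for any $\phi_j \in \Phi_H^{(i)}$ we have $\mathbb{1}\{\Lambda_{i,n}\} \leq \mathbb{1}\{\Lambda_{j,n}\}$, so
\[
\mathds{E}\!\left[\,\sum_{n=1}^{T}\mathbb{1}\{\Lambda_{i,n}\}\right] \,\leq\, \min_{\phi_j \in \Phi_H^{(i)}}\, \mathds{E}\!\left[\,\sum_{n=1}^{T}\mathbb{1}\{\Lambda_{j,n}\}\right].
\]

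Then, inside the minimum, I would convert back from counts of the LCB-event to offload counts by dividing/multiplying by $w_j$ via Lemma \ref{lem:iid} applied at index $j$:
\[
\mathds{E}\!\left[\,\sum_{n=1}^{T}\mathbb{1}\{\Lambda_{j,n}\}\right] \,=\, \frac{\mathds{E}[\mathsf{O}_{\phi_j}(T)]}{w_j} \,\leq\, \frac{1}{w_j}\,\max_{\sigma_T}\,\mathds{E}[\mathsf{O}_{\phi_j}(T)].
\]
At this point, the crucial switch from Lemma \ref{lem:Offload-H3} is to substitute the adversarial-regime bound for the known fixed offloading cost, namely Lemma \ref{lem:Offload-H2}, which gives $\max_{\sigma_T}\,\mathds{E}[\mathsf{O}_{\phi_j}(T)] \leq \tfrac{4\alpha \log T}{\Delta_{\phi_j}^2} + \tfrac{2\alpha}{2\alpha-1}$ (as opposed to the $16\alpha$ and $4\alpha/(2\alpha-1)$ constants from Lemma \ref{lem:Offload-H}). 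Combining these pieces yields the claimed bound
\[
\mathds{E}[\mathsf{O}_{\phi_i}(T)] \,\leq\, w_i \cdot \min_{\phi_j \in \Phi_H^{(i)}}\!\left(\frac{4\alpha \log T}{\Delta_{\phi_j}^2\, w_j} + \frac{2\alpha\, w_j^{-1}}{2\alpha-1}\right).
\]

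The only nontrivial step is the monotonicity argument $\mathbb{1}\{\Lambda_{i,n}\}\leq \mathbb{1}\{\Lambda_{j,n}\}$ for $\phi_j\leq\phi_i$, but this is already established by the same max operator in \eqref{eq:LCB_def} that powered Lemma \ref{lem:Offload-H3}; since the threshold $\text{LCB}_{\gamma}(n)$ (or the constant $\gamma$, under Remark \ref{rem:fc}) is the same for all indices, a smaller $\text{LCB}_{\phi_j}(n)$ makes the offload event strictly more likely. I anticipate no genuine obstacle here: the work is essentially a bookkeeping composition of Lemma \ref{lem:iid}, the non-decreasing LCB structure of HI-LCB, and Lemma \ref{lem:Offload-H2}. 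The only care required is ensuring that the minimum is taken over $\phi_j\in\Phi_H^{(i)}$ (so that $\Delta_{\phi_j}$ in the denominator is well-defined and positive, i.e., $\phi_j$ is a high-confidence arm where offloading is suboptimal), which is exactly the set over which the chain of inequalities $\text{LCB}_{\phi_i}\geq\text{LCB}_{\phi_j}$ remains useful.
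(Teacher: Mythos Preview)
Your proposal is correct and follows essentially the same route as the paper: the paper's proof of Lemma~\ref{lem:Offload-H4} simply reuses the chain of inequalities from Lemma~\ref{lem:Offload-H3} (Lemma~\ref{lem:iid} plus the monotonicity of $\text{LCB}_{\phi_i}$ under HI-LCB) and substitutes Lemma~\ref{lem:Offload-H2} in place of Lemma~\ref{lem:Offload-H} at the final step. Your explicit remark that $\text{LCB}_\gamma$ collapses to the constant $\gamma$ under Remark~\ref{rem:fc} is a helpful clarification that the paper leaves implicit.
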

\begin{proof}
Using Lemma \ref{lem:Offload-H2}, we upper-bound the expected number of offloads for a confidence measure $\phi_i \in \Phi_H$ for HI-LCB with constant offloading cost in the following manner:
\[
\begin{aligned}
\mathds{E}[\mathsf{O}_{\phi_i}(T)] &\leq  w_i\cdot \min_{\phi_j \in\Phi_H^{(i)}}\left( \max_{\sigma_T} {\mathds{E}[ \mathsf{O}_{\phi_j}(T)]}\cdot w_j^{-1}\right) \\
&\leq  w_i \cdot \min_{\phi_j \in\Phi_H^{(i)}}\left( \frac{4\alpha\log T}{\Delta_{\phi_j}^2 w_j} + \frac{2\alpha w_j^{-1}}{2\alpha-1} \right)\cdot
\end{aligned}
\]
This completes the proof of Lemma \ref{lem:Offload-H4}.
\end{proof}
\begin{proof}[Proof of Theorem \ref{thm:stoch} (c)]
Using Lemma \ref{lem:Offload-H4} to upper-bound the regret contribution from $\Phi_H$ and Lemma \ref{lem:Accept-L2} for upper-bounding the regret due to $\Phi_L$, we obtain the cumulative regret bound for $\pi=\text{HI-LCB}$ with stochastic arrivals and known fixed offload cost as
\[
\begin{aligned}
&R_{\pi}^S(T) \leq \sum_{\phi_i \in \Phi_H} \; \min_{\phi_j \in\Phi_H^{(i)}} \left\{\frac{4\alpha w_i \Delta_{\phi_i} }{w_j\Delta^2_{\phi_j}}\;\right\}\log T\; +\mathbf{C_4}(\Delta), \text{with}\\
    &\mathbf{C_4}(\Delta)=\frac{2\alpha }{2\alpha-1}\left(\sum_{ \phi_i \in\Phi_H}\Delta_{\phi_i}  \min_{\Phi^{(i)}_H}\left(   \frac{ w_i}{w_j} \right)+|\Phi_L|\max_{ \Phi_L}\; \Delta_{\phi_j} \right).
    \end{aligned}
\]
This completes the proof of Theorem \ref{thm:stoch} (c).
\end{proof}
\subsection{Proof of Theorem \ref{thm:lb}}
{
To establish the lower bound in Theorem \ref{thm:lb}, we consider a simple instance with a singleton the confidence set $\Phi = \{ \phi_1 \}$, and the offloading cost $\Gamma_t$ drawn i.i.d. from a $\text{Bernoulli}(\gamma)$ distribution with $\gamma > 1 - f(\phi_1)$.

 \begin{proof}[Proof of Theorem \ref{thm:lb}]  Let $\mathsf{O}_{\phi_1}(T)$ denote the number of offloads until time $T$, for the abovementioned setup, the adversarial regret of any policy is given by
 \[R_\pi^A(T) = \Delta_{\phi_1}\cdot \mathds{E}[\mathsf{O}_{\phi_1}(T)]. \]
 
We restrict our attention to policies that guarantee sublinear regret in this setup, i.e., $R_\pi^A(T) = o(T)$, without assuming knowledge of either the mean offloading cost $\gamma$ or the inference accuracy $f(\phi_1)$. 

We define $\mathbb{P}_{\gamma}$ as the true probability measure under which $\mathbb{E}[\Gamma_t] = \gamma$, and $\mathbb{P}_{\gamma'}$ as the probability measure under the alternative hypothesis where $\mathbb{E}[\Gamma_t] = \gamma'\leq 1 - f(\phi_1)$. Let $Y$ denote the event
\[Y :=  \lim_{T\rightarrow\infty} \frac{\mathsf{O}_{\phi_1}(T)}{T} \geq \frac{1}{2}   .\]

Under the true distribution $\mathbb{P}{\gamma}$, any policy achieving a sublinear $o(T)$ regret must ensure that the event $Y$ occurs with a low probability with
\begin{equation}
\label{eq:P-gamma}
\mathbb{P}_{\gamma}(Y) \leq \delta \; \text{for some } \delta \in (0,1).
\end{equation}

On the other hand, under the alternate distribution $\mathbb{P}{\gamma'}$, any such policy guaranteeing a sublinear regret must satisfy
\begin{equation}
\label{eq:P-1-f}
\mathbb{P}_{\gamma'}(Y) \geq 1 - \delta.
\end{equation}

Applying the \textit{Bretagnolle-Huber Inequality} \cite{LAI19854}, we have
\begin{equation}
    \mathbb{P}_{\gamma}(Y) + \mathbb{P}_{\gamma'}(\overline{Y}) \geq \frac{1}{2} \exp\left(-D(\mathbb{P}_{\gamma} \,\|\, \mathbb{P}_{\gamma'})\right),
\end{equation}
where $D(\cdot \,\|\, \cdot)$ denotes the \textit{Kullback–Leibler divergence} between two probability distributions. Simplifying the expression for KL divergence using \eqref{eq:P-gamma} and \eqref{eq:P-1-f}, we obtain
\begin{equation}
\label{eq:kl1}
    D(\mathbb{P}_{\gamma} \,\|\, \mathbb{P}_{\gamma'})  \geq \log\left(\frac{1}{4\delta}\right).
\end{equation}
For the Hierarchical Inference setup given in Section \ref{sec:Terminology}, the only source of information distinguishing $\mathbb{P}_{\gamma}$ from $\mathbb{P}_{\gamma'}$ is the outcome of observed inferences during offloads. Therefore, using the \textit{Diverge Decomposition Lemma} \cite{lattimore2020bandit}, we obtain
\begin{equation}
\label{eq:kl2}
\begin{aligned}
  D(\mathbb{P}_{\gamma} \,\|\, \mathbb{P}_{\gamma'}) &\leq \lim_{T\rightarrow\infty} \mathbb{E}[\mathsf{O}_{\phi_1}(T)] \cdot D_B( \gamma\,\|\,\gamma'),
  \end{aligned}
\end{equation}
 where $D_B(p \,\|\, q)$ denotes the Kullback–Leibler divergence between two Bernoulli distributions with parameters $p$ and $q$. Combining \eqref{eq:kl1} and \eqref{eq:kl2} yields
\[
 \lim_{T\rightarrow\infty} \mathbb{E}[\mathsf{O}_{\phi_1}(T)] \cdot D_B(\gamma\,\|\,\gamma') \geq \log\left(\frac{1}{4\delta}\right),  
\]
for any $\gamma' \leq 1-f(\phi_1)$.

Setting $\delta=1/T$, the cumulative regret incurred by any policy $\pi$ for the given instance with a singleton confidence measure set is lower bounded by
\[
\begin{aligned}
 \min_\pi \lim_{T\rightarrow\infty} R^A_\pi(T) &= \min_\pi \lim_{T\rightarrow\infty} \Delta_{\phi_1}\cdot \mathds{E}[\mathsf{O}_{\phi_1}(T)]\\&\geq  \max_{\{\gamma'\leq 1-f(\phi_1)\}}\,  \frac{\Delta_{\phi_1}\log T}{D_B(\gamma\,\|\,\gamma')}+O(1)\\
 &\geq \frac{\Delta_{\phi_1}\log T}{D_B(\gamma\,\|\,1-f(\phi_1))}+O(1),
 \end{aligned}
\]
i.e., $\min R_\pi^A(T)  = \Omega(\log T)$ for $\Phi=\{\phi_1\}$ with $\gamma>1-f(\phi_1)$, thus proving Theorem \ref{thm:lb}.
\end{proof}}

\section*{Appendix B}

In this section, we consider an alternative cost function where we use the ground truth as the baseline. Let $L_t^{\pi}$ denote the cost incurred by the system at time $t$ under policy $\pi$. It follows that 
\begin{equation*}
\label{eq:cost_with_ground_truth}
    L_t^{\pi} = (\Gamma_t + \eta(h_r,x_t)) \cdot D_\pi(t) + \eta(h_l,x_t) \cdot (1-D_\pi(t)).
\end{equation*}

\begin{figure}[h]
    \begin{subfigure}{\linewidth}
        \includegraphics[width=0.90\linewidth]{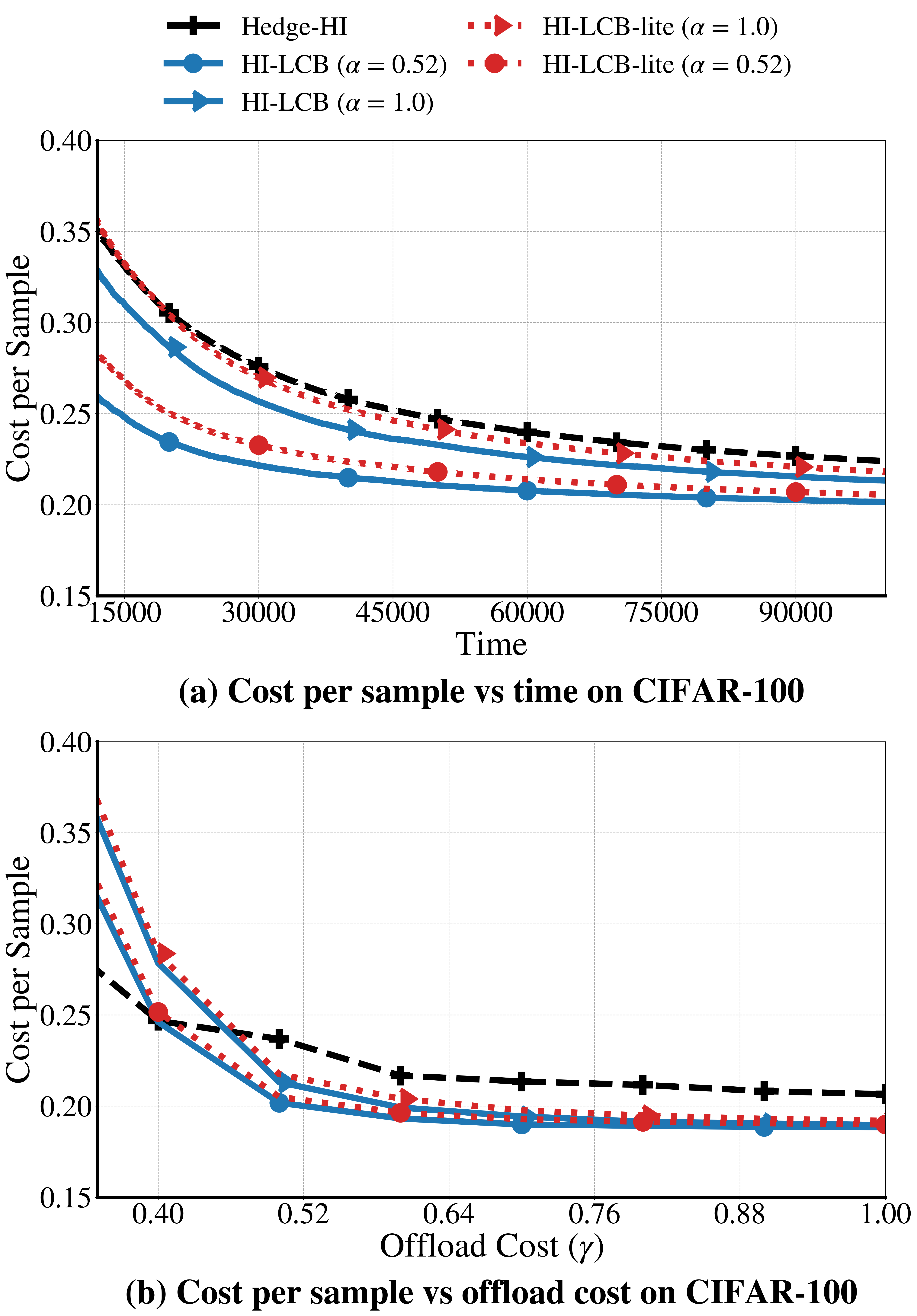}
    \end{subfigure}
    \caption{Simulation results for CIFAR-100 using the ground-truth as the baseline.}
    \label{fig:cifar100}
\end{figure}

Note that the information available to the policy is identical to that in Section \ref{sec:Terminology}, i.e., the policy observes the offloading cost and the inference of the Remote-ML model only when it offloads and the policy never observes the ground truth.

We evaluate the performance of various candidate policies in terms of the average cost per sample, defined as $(\sum_{\tau = 1}^{t} L_{\tau}^{\pi})/t$, as a function of different parameters. We compare the performance of HI-LCB and HI-LCB-lite alongside HI-Hedge for image classification on the CIFAR-100 dataset with horizon $T = 100\,000$, using ResNet-50 (53\% accuracy) as the Local-ML model and WideResNet-28-10 (82\% accuracy) as the Remote-ML model.

\begin{remark}
The static policy from Lemma~\ref{lem:opt} does not serve as an appropriate baseline here because it uses the Remote-ML inference as a proxy for the ground-truth. We therefore compare the average cost per sample of the various policies in place of comparing their regret.
\end{remark}

In Figure~\ref{fig:cifar100}(a), we illustrate the evolution of per-sample inference cost for a system with a known fixed offloading cost of $\gamma = 0.5$. Figure~\ref{fig:cifar100}(b) shows the per-sample inference cost at $T = 100\,000$ as a function of the offloading cost $\gamma$. Finally, in Figure~\ref{fig:cifar100-2}, we examine how the per-sample inference costs of HI-LCB and HI-LCB-lite vary with the exploration parameter $\alpha$, for a fixed offloading cost of $\gamma = 0.5$ and  $T = 100\,000$.

The results for this setup closely mirror those in Section~\ref{sec:results}, with HI-LCB and HI-LCB-lite outperforming HI-Hedge. For $T = 100\,000$, offloading cost $\gamma = 0.5$, and $\alpha = 0.52$, HI-LCB (80.5\% accuracy) and HI-LCB-lite (80.4\% accuracy) achieve more accurate inference than HI-Hedge (79.1\% accuracy), while incurring lower per-sample costs on CIFAR-100.

\begin{figure}[t]
    \centering
    \begin{subfigure}{\linewidth}
        \centering
        \includegraphics[width=0.90\linewidth]{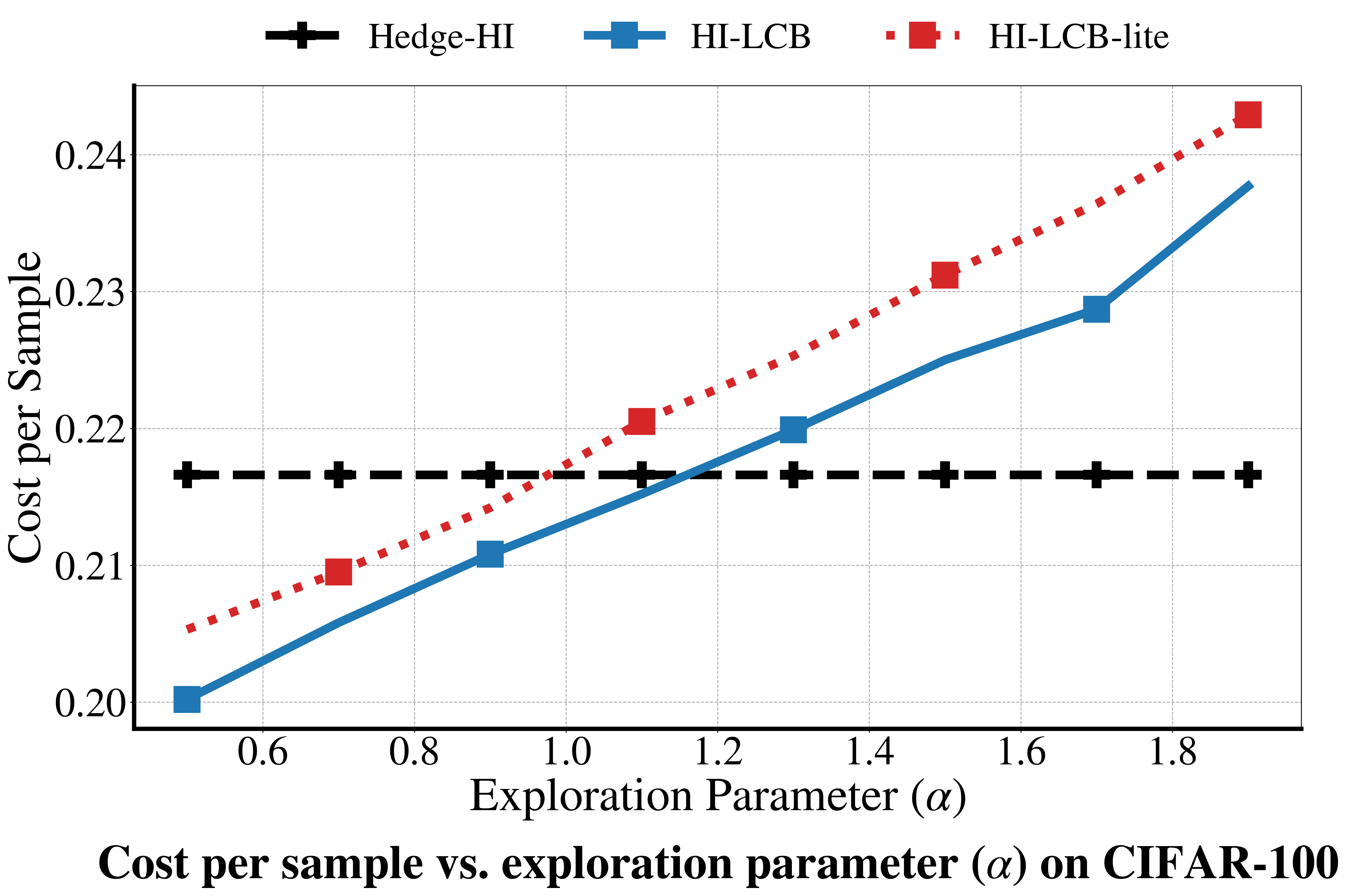}
    \end{subfigure}

    \caption{Variation of per-sample inference cost with exploration parameter ($\alpha$) for HI-LCB and HI-LCB-lite for CIFAR-100 using the ground-truth as the baseline.}
    \label{fig:cifar100-2}
\end{figure}

\end{document}